\newcommand{\csection}[1]{
    \vspace{-0.02in}
    \section{#1}
    \vspace{-0.04in}
}
\newcommand{\csubsection}[1]{
    \vspace{-0.02in}
    \subsection{#1}
    \vspace{-0.02in}
}
\newcounter{algsubstate}
\def\eqref#1{equation~\ref{#1}}
\def\1{\bm{1}}
\DeclareMathAlphabet{\mathsfit}{\encodingdefault}{\sfdefault}{m}{sl}
\SetMathAlphabet{\mathsfit}{bold}{\encodingdefault}{\sfdefault}{bx}{n}
\newtheorem{theorem}{Theorem}[section]
\newtheorem{lemma}{Lemma}[section]
\title{DeepAveragers:\hspace{3pt}Offline Reinforcement Learning by Solving Derived Non-Parametric MDPs}
\author{Aayam Shrestha, Stefan Lee, Prasad Tadepalli, Alan  Fern  \\
Oregon State University\\
Corvallis, OR 97330, USA \\
\texttt{\{shrestaa, leestef, tadepall, alan.fern\}@oregonstate.edu} 
}
\begin{document}

\maketitle

% \subsubsection*{Acknowledgments}
% Use unnumbered third level headings for the acknowledgments. All
% acknowledgments, including those to funding agencies, go at the end of the paper.

\vspace{-1em}
\begin{abstract}
We study an approach to offline reinforcement learning (RL) based on optimally solving finitely-represented MDPs derived from a static dataset of experience. This approach can be applied on top of any learned representation and has the potential to easily support multiple solution objectives as well as zero-shot adjustment to changing environments and goals. Our main contribution is to introduce the Deep Averagers with Costs MDP (DAC-MDP) and to investigate its solutions for offline RL. DAC-MDPs are a non-parametric model that can leverage deep representations and account for limited data by introducing costs for exploiting under-represented parts of the model. In theory, we show conditions that allow for lower-bounding the performance of DAC-MDP solutions. We also investigate the empirical behavior in a number of environments, including those with image-based observations. Overall, the experiments demonstrate that the framework can work in practice and scale to large complex offline RL problems. 
\end{abstract}

\vspace{-1em}
\csection{Introduction}

\footnotetext{  This work is supported by the DARPA Contract W911NF-16-1-0002.}

Research in automated planning and control has produced powerful algorithms to solve for optimal, or near-optimal, decisions given accurate environment models. Examples include the classic value- and policy-iteration algorithms for tabular representations or more sophisticated symbolic variants for graphical model representations (e.g. \cite{boutilier2000,raghavan2012}). In concept, these planners address many of the traditional challenges in reinforcement learning (RL). They can perform ``zero-shot transfer" to new goals and changes to the environment model, accurately account for sparse reward or low-probability events, and solve for different optimization objectives (e.g. robustness). Effectively leveraging these planners, however, requires an accurate model grounded in observations and expressed in the planner's representation.
On the other hand, model-based reinforcement learning (MBRL) aims to learn grounded models to improve RL's data efficiency. Despite developing grounded environment models, the vast majority of current MBRL approaches do not leverage near-optimal planners to help address the above challenges. Rather, the models are used as black-box simulators for experience augmentation and/or Monte-Carlo search. Alternatively, model learning is sometimes treated as purely an auxiliary task to support representation learning. 

The high-level goal of this paper is to move toward MBRL approaches that can effectively leverage near-optimal planners for improved data efficiency and flexibility in complex environments.  However, there are at least two significant challenges. First, there is a mismatch between the deep model representations typically learned in MBRL (e.g. continuous state mappings) and the representations assumed by many planners (e.g. discrete tables or graphical models). Second, near-optimal planners are well-known for exploiting model inaccuracies in ways that hurt performance in the real environment, e.g.  \citep{atkeson1998}. This second challenge is particularly significant for offline RL, where the training experience for model learning is fixed and limited.  

We address the first challenge above by focusing on tabular representations, which are perhaps the simplest, but most universal representation for optimal planning. Our main contribution is an offline MBRL approach based on optimally solving a new model called the Deep Averagers with Costs MDP (DAC-MDP). A DAC-MDP is a non-parametric model derived from an experience dataset and a corresponding (possibly learned) latent state representation. While the DAC-MDP is defined over the entire continuous latent state space, its full optimal policy can be computed by solving a standard (finite) tabular MDP derived from the dataset. This supports optimal planning via any tabular MDP solver, e.g. value iteration. To scale this approach to typical offline RL problems, we develop a simple GPU implementation of value iteration that scales to millions of states. As an additional engineering contribution, this implementation will be made public.

To address the second challenge of model inaccuracy due to limited data, DAC-MDPs follow the \emph{pessimism in the face of uncertainty} principle, which has been shown effective in a number of prior contexts (e.g. \citep{Fonteneau2013BatchMR}). In particular, DAC-MDPs extend Gordon's Averagers framework \citep{Gordon1995} with additional costs for exploiting transitions that are under-represented in the data. Our second contribution is to give a theoretical analysis of this model, which provides conditions under which a DAC-MDP solution will perform near optimally in the real environment.

Our final contribution is to empirically investigate the DAC-MDP approach using simple latent representations derived from random projections and those learned by Q-iteration algorithms. Among other results, we demonstrate the ability to scale to Atari-scale problems, which is the first demonstration of optimal planning being effectively applied across multiple Atari games. In addition, we provide case studies in 3D first-person navigation that demonstrate the flexibility and adaptability afforded by integrating optimal planning into offline MBRL. These results
show the promise of our approach for marrying advances in representation learning with optimal planning.

\vspace{-1em}
\begin{figure}[t]
    \centering
    \includegraphics[width=\textwidth]{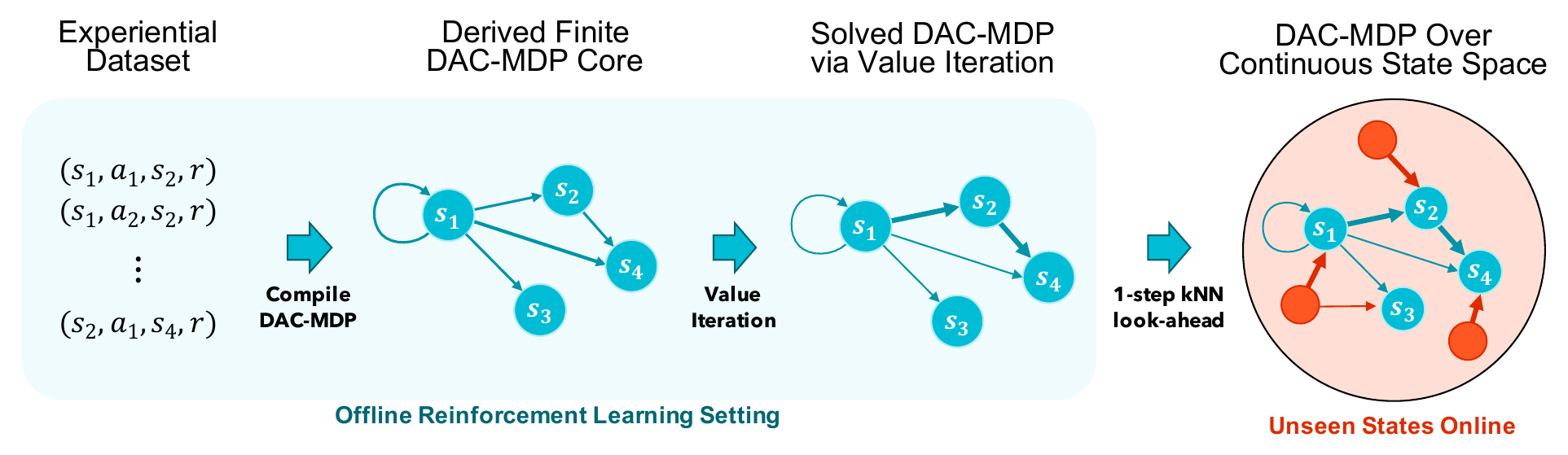}
    \caption{Overview of Offline RL via DAC-MDPs. Given a static experience dataset, we first compile it into a finite tabular MDP which is at most the size of the dataset. This MDP contains the ``core" states of the full continuous DAC-MDP. The finite core-state MDP is then solved via value iteration, resulting in a policy and Q-value function for the core states. This finite Q-function is used to define a non-parametric Q-function for the continuous DAC-MDP, which allows for Q-values and hence a policy to be computed for previously unseen states.}
    \vspace{-1em}
\label{fig:dfm_singlerow}
\end{figure}

\csection{Formal Preliminaries}
\label{sec:preliminaries}

A Markov Decision Process (MDP) is a tuple $\langle \mathcal{S}, \mathcal{A}, T, R\rangle$
\citep{Puterman1994MarkovDP}, with state set $\mathcal{S}$, action set $\mathcal{A}$, transition function $T(s,a,s')$, and reward function $R(s,a)$. A policy $\pi$ maps states to actions and has Q-function $Q^{\pi}(s,a)$ giving the expected infinite-horizon $\beta$-discounted reward of following $\pi$ after taking action $a$ in $s$. The optimal policy $\pi^*$ maximizes the Q-function over all policies and state-action pairs. $Q^*$ corresponds to the optimal Q-function that satisfies $\pi^*(s)=\arg\max_a Q^*(s,a)$. $Q^*$ can be computed given the MDP by repeated application of the \emph{Bellman Backup Operator} $B$, which for any Q-function $Q$, returns a new Q-function given by,
\begin{align}
    \label{bellman_equation}
    B[Q](s,a) = R(s,a) + \gamma \mathbb{E}_{s' \sim T(s,a,\cdot)}\left[\max_a Q(s',a)\right]. 
\end{align}

The objective of RL is to find a near-optimal policy without prior knowledge of the MDP. In the online RL setting, this is done by 
actively exploring actions in the environment. Rather, in the offline RL \citep{Levine2020OfflineRL}, which is the focus of this paper, learning is based on a static dataset $\mathcal{D} = \{(s_i, a_i, r_i, s'_i)\}$, where each tuple gives the reward $r_i$ and next state $s'_i$ observed after taking action $a_i$ in state $s_i$. 

In strict offline RL setting, the final policy selection must be done using only the dataset, without direct access to the environment. This includes all hyperparameter tuning and the choice of when to stop learning. Evaluations of offline RL, however, often blur this distinction, for example, reporting performance of the best policy obtained across various hyperparameter settings as evaluated via new online experiences \citep{Gulcehre2020RLUB}. Here we consider an evaluation protocol that makes the amount of online access to the environment explicit. In particular, the offline RL algorithm is allowed to use the environment to evaluate $N_e$ policies (e.g. an average over repeated trials for each policy), which, for example, may derive from different hyperparameter choices. The best of the evaluated policies can then be selected. Note that $N_e=1$ corresponds to pure offline RL.

\vspace{-0.3em}
\csection{Deep Averagers with Costs MDPs (DAC-MDPs)}
\vspace{-0.3em}
\label{sec:dac-mdps}

From a practical perspective our approach carries out the following steps as illustrated in Figure \ref{fig:dfm_singlerow}. (1) We start with a static experience dataset, where the states are assumed to come from a continuous latent state space. For example, states encoded via random or learned deep representations.
(2) Next we compile the dataset into a
tabular MDP over the ``core states" of the DAC-MDP (those in the dataset). This compilation uses $k$-nearest neighbor (kNN) queries to define the reward and transition functions (Equation \ref{eq:DAC-RT}) functions of the core states. (3) Next we use a GPU implementation of 
value iteration to solve for the tabular MDP's optimal 
Q-function. (4) Finally, this tabular Q-function is used to define the Q-function over the entire DAC-MDP (Equation \ref{eq:DAC-Q}). Previously unseen states at test time are assigned Q-values and in turn a policy action via kNN queries over the core states.   

Conceptually, our DAC-MDP model is inspired by Gordon's (\citeyear{Gordon1995}) early work that showed the convergence of (offline) approximate value iteration for a class of function approximators; \emph{averagers}, which includes methods such as $k$ nearest neighbor (kNN) regression, among others. It was also observed that approximate value iteration using an averager was equivalent to solving an MDP derived from the offline data. That observation, however, was not investigated experimentally and has yet to be integrated with deep representation learning. Here we develop and evaluate such an integration.

The quality of an averagers MDP, and model-learning in general, depends on the size and distribution of the dataset. In particular,
an optimal planner can exploit inaccuracies in the underrepresented parts of the state-action space, which can lead to poor performance in the real environment. The DAC-MDPs aim to avoid this by augmenting the derived MDPs with costs/penalties on under-represented transitions. This turns out to be essential to achieving good performance on challenging benchmarks.

\vspace{-0.3em}
\csubsection{DAC-MDP Definition}
\vspace{-0.3em}
A DAC-MDP is defined in terms of an experience dataset $\mathcal{D}=\{(s_i, a_i, r_i, s'_i)\}$ from the true MDP $\mathcal{M}$ with continuous latent state space $\mathcal{S}$ and finite action space $\mathcal{A}$. The DAC-MDP $\tilde{M} = (\mathcal{S}, \mathcal{A}, \tilde{R}, \tilde{T})$ shares the same state and action spaces as $\mathcal{M}$, but defines the reward and transition functions 
in terms of empirical averages over the $k$ nearest neighbors of $(s,a)$ in $\mathcal{D}$.

The distance metric $d(s,a,s',a')$ gives the distance between pairs $(s,a)$ and $(s',a')$. This metric considers (s,a) pairs with different actions to be \emph{infinitely distant}. Otherwise, the distance between pairs involving the same action is the \emph{euclidean distance} between their states. In particular, 
the distance between $(s,a)$ and a data tuple $(s_i,a_i,r_i,s'_i)$ is given by $d(s,a,s_i,a_i)$. Also, we let $kNN(s,a)$ denote the set of indices of the $k$ nearest neighbors to $(s,a)$ in $\mathcal{D}$, noting that the dependence on $\mathcal{D}$ and $d$ is left implicit. Given hyperparameters $k$ (smoothing factor) and $C$ (cost factor) we can now specify the DAC-MDP reward and transition function. 
\begin{align}
    \tilde{R}(s,a) = \frac{1}{k}\sum_{i\in kNN(s,a)} r_i - C\cdot d(s,a,s_i,a_i),  \;\;\;\; \tilde{T}(s,a,s') = \frac{1}{k}\sum_{i \in kNN(s,a)} I[s' = s'_i] \label{eq:DAC-RT} 
        % previous version 
%     \tilde{T}(s,a,s') = \frac{1}{k}\sum_{i \in kNN(s,a)} I[s' = s'_i]
%        \label{eq:DAC-T}
% \vspace{-1em}
\end{align}
The reward for $(s,a)$ is simply the average reward of the nearest neighbors with a penalty for each neighbor that grows linearly with the distance to a neighbor. Thus, the
farther 
$(s,a)$ is to its nearest neighbor set, 
 the less desirable its immediate reward will be. 
The transition function is simply the empirical distribution over destination states of the nearest neighbor set.

Importantly, even though a DAC-MDP has an infinite continuous state space, it has a special finite structure. Since the transition function $\tilde{T}$ only allows transitions to states appearing as destination states in $\mathcal{D}$. We can view $\tilde{M}$ as having a finite core set of states $\mathcal{S}_D=\{s'_i \;|\; (s_i,a_i,r_i,s'_i)\in \mathcal{D}\}$. States in this core do not transition to non-core states and each non-core state immediately transitions to the core for any action. Hence, the value of core states is not influenced by non-core states. Further, once the core values are known, we can compute the values of any non-core state via one-step look ahead using $\tilde{T}$. \emph{Thus, we can optimally solve a DAC-MDP by solving just its finite core.} 

Specifically, let $\tilde{Q}$ be the optimal Q-function of $\tilde{M}$. We can compute $\tilde{Q}$ for the core states by solving the finite MDP $\tilde{M}_D = (\mathcal{S}_D,\mathcal{A},\tilde{R},\tilde{T})$. 
% {\color{red} Given $\tilde{Q}$ for the core states }
We can then compute $\tilde{Q}$ for any non-core state on demand via the following one-step look-ahead expression.\footnote{ Note that we can get $\tilde{V}$ using value iteration on $\tilde{M}_D$.  $\tilde{Q}$ can then be computed via 1-step lookup as $\tilde{Q}(s,a) = \tilde{R}(s,a) +\gamma  \sum_{s'\in \tilde{T}(s,a)} \tilde{T}(s,a,s') \cdot \tilde{V}(s)$} This allows us to compute the optimal policy of $\tilde{M}$, denoted $\tilde{\pi}$, using any solver of finite MDPs.
\begin{align}
\tilde{Q}(s,a) = \frac{1}{k} \sum_{i\in kNN(s,a)} r_i + \gamma \max_a \tilde{Q}(s'_i,a) -  C\cdot d(s,a,s_i,a_i)   
\label{eq:DAC-Q}
\end{align}
\vspace{-1.5em}

\csubsection{DAC-MDP Performance Lower Bound}
\label{sec:theory}

We are ultimately interested in how well the optimal DAC-MDP $\tilde{M}$ policy $\tilde{\pi}$ performs in the true MDP $\mathcal{M}$. Without further assumptions, $\tilde{\pi}$ can be arbitrarily sub-optimal in $\mathcal{M}$, due to potential ``non-smoothness" of values, limited data, and limited neighborhood sizes. We now provide a lower-bound on the performance of $\tilde{\pi}$ in $\mathcal{M}$ that quantifies the dependence on these quantities. Smoothness is characterized via a Lipschitz smoothness assumptions on $B[\tilde{Q}]$, where $B$ is the Bellman operator for the true MDP and $\tilde{Q}$ is the optimal Q-function for $\tilde{M}$ using hyperparameters $k$ and $C$. In particular, we assume that there is a constant $L(k,C)$, such that for any state-action pairs $(s,a)$ and $(s',a')$ $$\left|B[\tilde{Q}](s,a) - B[\tilde{Q}](s',a')\right| \leq L(k,C)\cdot d(s,a,s',a').$$ 
This quantifies the smoothness of the Q-function obtained via one-step look-ahead using the true dynamics
% on top of $\tilde{Q}$.
The coverage of the dataset is quantified in terms of the worst case average distance to a kNN set defined by $\bar{d}_{max}$$=$$\max_{(s,a)} \frac{1}{k} \sum_{i\in kNN(s,a)} d(s,a,s_i,a_i)$. The bound also depends on $Q_{max}$$=$$\max_{(s,a)} \tilde{Q}(s,a)$ and $M_{N,k}$, which is the maximum number of distinct kNN sets over a dataset of size $N$. For $L2$ distance over a $d$-dimensional space,$M_{N,k}$ is bounded by $O\left((kN)^{{\frac {d} {2}}+1}\right)$.

\begin{theorem}
\label{theorem:main}
For any data set $\mathcal{D}$ of size $N$, let $\tilde{Q}$ and $\tilde{\pi}$ be the optimal Q-function and policy for the corresponding DAC-MDP with parameters $k$ and $C$. If $B[\tilde{Q}]$ is Lipshitz continuous with constant $L(k,C)$, then with probability at least $1-\delta$, \begin{align*}
    V^{\tilde{\pi}} & \geq V^* - \frac{2\left(L(k,C)\cdot \bar{d}_{max} + Q_{max}\epsilon(k,N,\delta)\right)}{1-\gamma},  \;\;\; \epsilon(k,N,\delta) = \sqrt{\frac{1}{2k}\ln{\frac{2 M_{N,k} }{\delta}}},
\end{align*}
which for L2 distance over a $d$-dimensional space yields  $\epsilon(k,N,\delta) = O\left(\sqrt{\frac{1}{k}\left(d\ln{kN}+\ln{\frac{1}{\delta}}\right)}\right).$
\end{theorem}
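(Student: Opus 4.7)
The plan is to reduce the performance gap $V^* - V^{\tilde{\pi}}$ in the true MDP to a sup-norm error on $Q$-functions and then bound that error through a one-step Bellman comparison. First I would invoke the standard policy-loss inequality $V^{\tilde{\pi}} \geq V^* - \tfrac{2}{1-\gamma}\,\|\tilde{Q} - Q^*\|_\infty$. Since $Q^*$ is the fixed point of $B$ and $\tilde{Q}$ is the fixed point of the DAC-MDP Bellman operator $\tilde{B}$, I would write $\tilde{Q} - Q^* = (\tilde{B}[\tilde{Q}] - B[\tilde{Q}]) + (B[\tilde{Q}] - B[Q^*])$ and use that $B$ is a $\gamma$-contraction in the sup-norm to obtain
\[
(1-\gamma)\,\|\tilde{Q} - Q^*\|_\infty \;\leq\; \|\tilde{B}[\tilde{Q}] - B[\tilde{Q}]\|_\infty.
\]
This reduces the entire problem to bounding a one-step discrepancy between the two Bellman operators applied to the same function $\tilde{Q}$.

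The second step is to decompose that one-step discrepancy at any fixed $(s,a)$ into three interpretable pieces. Writing $Y_i := r_i + \gamma \max_{a'}\tilde{Q}(s'_i,a')$ for $i \in kNN(s,a)$ and $f := B[\tilde{Q}]$, so that $\mathbb{E}[Y_i \mid s_i,a_i] = f(s_i,a_i)$, the DAC-MDP definitions from Equation~\ref{eq:DAC-RT} give
\[
\tilde{B}[\tilde{Q}](s,a) - f(s,a) \;=\; \underbrace{\tfrac{1}{k}\sum_{i\in kNN(s,a)}(Y_i - f(s_i,a_i))}_{\text{sampling noise}} \;+\; \underbrace{\tfrac{1}{k}\sum_i (f(s_i,a_i) - f(s,a))}_{\text{smoothness bias}} \;-\; \underbrace{\tfrac{C}{k}\sum_i d(s,a,s_i,a_i)}_{\text{pessimism}}.
\]
The Lipschitz hypothesis on $B[\tilde{Q}]$ bounds the smoothness term in magnitude by $L(k,C)\,\bar{d}(s,a) \leq L(k,C)\,\bar{d}_{max}$. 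The pessimism term has a known sign with magnitude proportional to $\bar{d}_{max}$ and either cancels into the downside error or is absorbed into the effective Lipschitz contribution via the definition of $L(k,C)$.

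The main obstacle is the sampling-noise term, which must be controlled \emph{uniformly} over every query $(s,a)$, not merely at a single one. For a fixed kNN set, the $Y_i$'s are bounded in range by $O(Q_{max})$ and, treated as conditionally independent given the dataset covariates, Hoeffding's inequality yields a pointwise deviation of order $Q_{max}\sqrt{\ln(1/\delta)/k}$. The key structural observation is that the sampling-noise term depends on $(s,a)$ only through the index set $kNN(s,a)$, and a combinatorial argument shows that the number of distinct $k$-subsets realizable as a nearest-neighbor set of \emph{some} query is at most $M_{N,k}$ (which for Euclidean distance in $\mathbb{R}^d$ is $O((kN)^{d/2+1})$). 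A union bound over these $M_{N,k}$ possibilities upgrades the pointwise Hoeffding estimate to the uniform bound $Q_{max}\,\epsilon(k,N,\delta)$ with $\epsilon(k,N,\delta) = \sqrt{\tfrac{1}{2k}\ln(2M_{N,k}/\delta)}$, with failure probability at most $\delta$.

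Combining the three pieces via the triangle inequality yields $\|\tilde{B}[\tilde{Q}] - B[\tilde{Q}]\|_\infty \leq L(k,C)\,\bar{d}_{max} + Q_{max}\,\epsilon(k,N,\delta)$ with probability at least $1-\delta$. Chaining back through the contraction inequality from the first step and the policy-loss inequality then gives the claimed lower bound on $V^{\tilde{\pi}}$, and substituting the combinatorial estimate on $M_{N,k}$ produces the stated $O\!\big(\sqrt{(d\ln kN + \ln(1/\delta))/k}\big)$ rate in the Euclidean case.
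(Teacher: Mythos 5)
Your treatment of the one-step discrepancy is essentially the paper's: the same three-way split into sampling noise, smoothness bias, and the pessimism cost, and the same Hoeffding-plus-union-bound over the $M_{N,k}$ realizable kNN sets to get the uniform $Q_{max}\,\epsilon(k,N,\delta)$ term. The gap is in how you convert that one-step bound into a policy-loss bound. You chain the value-loss inequality $V^{\tilde{\pi}} \geq V^* - \tfrac{2}{1-\gamma}\|\tilde{Q}-Q^*\|_\infty$ with the contraction estimate $(1-\gamma)\|\tilde{Q}-Q^*\|_\infty \leq \|\tilde{B}[\tilde{Q}]-B[\tilde{Q}]\|_\infty$; composing these stacks two factors of $\tfrac{1}{1-\gamma}$ and only proves a bound with $(1-\gamma)^2$ in the denominator, which is strictly weaker than the stated theorem. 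The paper avoids this by invoking a direct Bellman-residual-to-policy-loss result (Lemma \ref{lemma:bellman-error}, from \cite{pazis2013}, Theorem 3.12): a uniform two-sided bound $-\epsilon_- \leq \tilde{Q} - B[\tilde{Q}] \leq \epsilon_+$ immediately gives $V^{\tilde{\pi}} \geq V^* - \tfrac{\epsilon_-+\epsilon_+}{1-\gamma}$, with a single horizon factor. Note that $\tilde{Q} - B[\tilde{Q}] = \tilde{B}[\tilde{Q}] - B[\tilde{Q}]$ since $\tilde{Q}$ is the fixed point of $\tilde{B}$, so your decomposition plugs straight into that lemma.

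The second, related problem is your claim that the pessimism term ``either cancels into the downside error or is absorbed into the effective Lipschitz contribution,'' yielding $\|\tilde{B}[\tilde{Q}]-B[\tilde{Q}]\|_\infty \leq L(k,C)\bar{d}_{max} + Q_{max}\epsilon$. A symmetric sup-norm bound cannot drop the cost: on the downside the cost \emph{adds} to the Lipschitz bias, giving $-\bigl(L(k,C)+C\bigr)\bar{d}_{max}$, so the best sup-norm bound is $\bigl(L(k,C)+C\bigr)\bar{d}_{max} + Q_{max}\epsilon$, leaving a spurious $C\bar{d}_{max}$ term. The paper's asymmetric bounds, $-\bigl(L(k,C)+C\bigr)\bar{d}_{max} \leq \xi_d(s,a) \leq \bigl(L(k,C)-C\bigr)\bar{d}_{max}$, are exactly what make the $\pm C\bar{d}_{max}$ contributions cancel in the sum $\epsilon_-+\epsilon_+ = 2L(k,C)\bar{d}_{max}$ that the Pazis--Parr lemma consumes. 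Both issues are repaired by the same fix: keep the upper and lower Bellman-error bounds separate and apply the one-step residual lemma rather than passing through $\|\tilde{Q}-Q^*\|_\infty$.
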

\emph{The full proof is in the Appendix \ref{sec:Theoretical_proofs}.} The first term in the bound characterizes how well the dataset represents the MDP from a nearest neighbor perspective. In general, $\bar{d}_{max}$ decreases as the dataset becomes larger. The second term characterizes the variance of stochastic state transitions in the dataset and decreases with the smoothness parameter $k$. Counter-intuitively we see that for Euclidean distance, the second term can grow logarithmically in the dataset size.This worst-case behavior is due to not making assumptions about the distribution of source states-actions in the dataset. Thus, we can't rule out adversarial choices that negatively influence the kNN sets of critical states.

\textbf{Practical Issues}: There are some key practical issues of the framework regarding the scalability of VI and selection of hyperparameters. We exploit the fixed sparse structure of DAC-MDPs along with parallel compute afforded by GPUs to gracefully scale our VI to millions of states. This can provide anywhere between 20-1000x wall clock speedup over its serial counterpart. To reduce the sensitivity to the smoothness parameter k, we use weighted averaging based on distances to nearest neighbors instead of uniform averaging in Equation \ref{eq:DAC-RT}. Furthermore, we use different smoothness parameter k and $k_\pi$ to build the DAC-MDP and compute the policy respectively. We find that using larger values of $k_\pi$ is generally beneficial to reduce the variance. Finally we use a simple rule-of-thumb of setting cost factor $C$ to be in the order of magnitude of observed rewards. These choices are further elaborated in detail in appendix \ref{sec:Apppendix_practical_issues}

\csubsection{Representation Learning}

While DAC-MDPs can be defined over raw observations, for complicated observation spaces, such as images, performance will likely be poor in practice for simple distance functions. Thus, combining the DAC-MDP framework with deep representation learning is critical for observation-rich environments. Since the primary focus of this paper is on the introduction of DAC-MDPs, rather than representation learning, below we describe three basic representation-learning approaches used in our experiments. An important direction for future work is to investigate the effectiveness of 
other alternatives and to develop representation-learning specifically for the DAC-MDP framework. 

\textbf{Random Projection ($RND$):} This baseline simply produces a representation vector as the output of a chosen network architecture with randomly initialized weights (e.g. a CNN with random weights for images). 
\textbf{Offline DQN ($DQN$):} From the first term of Theorem \ref{theorem:main} we see that decreasing the Lipschitz constant of $B[\tilde{Q}]$ can improve the performance bound. This suggests that a representation that is smooth with respect to the Q-function can be beneficial. For this purpose, our second representation 
directly uses 
the penultimate layer of DQN \citep{Mnih2013PlayingAW} 
after it has been trained on our offline dataset $\mathcal{D}$. Since the penultimate layer is just a linear mapping away from the DQN Q-values, it is reasonable to believe that this representation will be smooth with respect to meaningful Q-functions for the environment. 
\textbf{BCQ - Imitation ($BCQ$):}  The offline RL algorithm BCQ \citep{Fujimoto2019BenchmarkingBD} trains an imitation network to emulate the action distribution of the behavioral policy. This policy is used to help avoid backing up action values for under-represented actions. We use the penultimate layer of the imitation network trained on our dataset as the third type of representation. This representation is not as well-motivated by our performance bounds, but is intuitively appealing and provides a policy-centric contrast to Q-value modeling.

\vspace{-0.8em}
\csection{Experiments}
\vspace{-0.6em}
The experiments are divided into three sections. First, we present exploratory experiments in Cartpole to illustrate the impact of DAC-MDP parameters using an idealized representation.  
Second, we demonstrate the scalability of the approach on Atari games with image-based observations. Finally, we demonstrate use-cases enabled by our planning-centric approach in a 3D first-person %navigation
domain.

\begin{figure}[t]
    \centering
    \includegraphics[width=\textwidth,trim={11cm 0cm 16cm 1cm},clip]{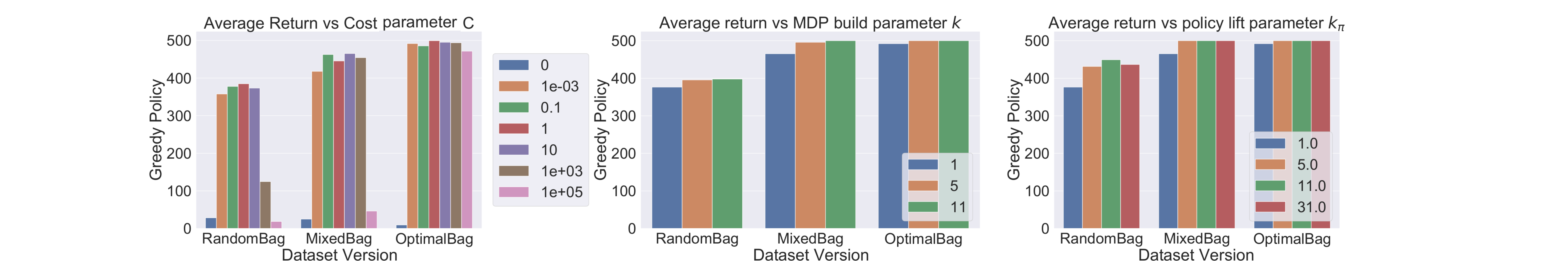}
    \caption{(a) Greedy Policy performance for CartPole with varying (a)  cost paramter $C$. (b) smoothness parameter $k$. (c) policy smoothing parameter $k_\pi$ }

\label{fig:all_tunable_params}
\vspace{-1.2em}
\end{figure}

\label{sec:exploration}
\textbf{Exploring DAC-MDP Parameters}: We explore the impact of the DAC-MDP parameters in Cartpole. To separate the impact of state-representation and DAC-MDP parameters, we use the ideal  state representation consisting of the true 4-dimensional system state. We  generate three datasets of size 100k each: (1) $RandomBag\;(V^{\pi_\beta} = 20)$.\footnote{ $V^{\pi_\beta}$ is the expected performance of the behavioral policy for the dataset. }: Trajectories collected by random policies. (2) $OptimalBag\;(V^{\pi_\beta} = 140)$: Trajectories collected by a well-trained DQN policy. (3) $MixedBag\;(V^{\pi_\beta} = 500)$: Trajectories collected by a mixed bag of $\epsilon$-greedy DQN policies using $\epsilon \in [0, 0.1, 0.2, 0.4, 0.6, 1]$.

We first illustrate the impact of the cost parameter $C$ by fixing $k=k_{\pi}=1$ and varying $C$ from 0 to 1e6. Figure \ref{fig:all_tunable_params}a shows that for all datasets when there is no penalty for under-represented transitions, i.e. $C=0$ , the policy is extremely poor. This shows that even for this relatively simple problem with a relatively large dataset, the MDPs from the original cost-free averagers framework, can yield low-quality policies. At the other extreme, when $C$ is very large, the optimal policy tries to stay as close as possible to transitions in the dataset. This results in good performance for the \textit{OptimalBag} dataset, since all actions are near optimal. However, the policies resulting from the \textit{MixedBag} and \textit{RandomBag} datasets fail. This is due to those datasets containing a large number of sub-optimal actions, which the policy should actually avoid for purposes of maximizing reward. Between these extremes the performance is relatively insensitive to the choice of $C$. 

Figure \ref{fig:all_tunable_params}b explores the impact of varying $k$ using $k_{\pi}=1$ and $C=1$. The main observation is that there is a slight disadvantage to using $k=1$, which defines a deterministic finite MDP, compared to $k>1$, especially for the non-optimal datasets. This indicates that the optimal planner is able to benefit by reasoning about the stochastic transitions of the DAC-MDP for $k>1$. Otherwise the results are relatively insensitive to $k$. We set $k=5$ for remaining experiments unless otherwise stated. 
Finally, Figure~\ref{fig:all_tunable_params}c varies the policy smoothing parameter $k_{\pi}$ from 1 to 31. Similar to the results for varying $k$, there is a benefit to using a value of $k_{\pi} > 1$, but otherwise the sensitivity is low. Thus, even for this relatively benign problem setting, some amount of averaging at test time appears beneficial. We find that DAC-MDPs are slightly more sensitive to these parameters in low data regime. Similar experiments for low data regime are presented in Appendix \ref{sec:addtionalCartPole}. 

\begin{figure}[t]
  \begin{subfigure}[b]{0.49\textwidth}
    \includegraphics[width=\textwidth,trim={10cm 2cm 10cm 1cm},clip]{"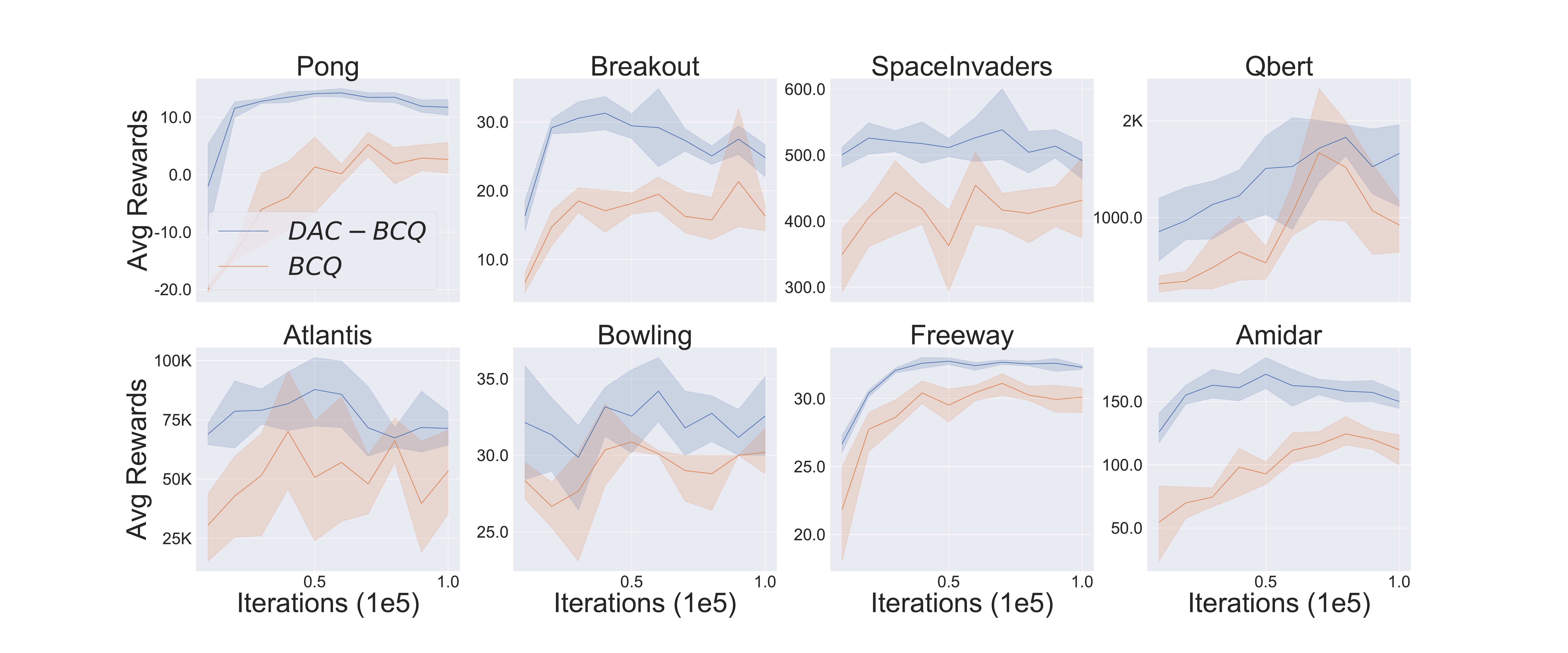"}
    \label{fig:RollingBCQ}
    \vspace{-1.2em}
  \end{subfigure}
  \begin{subfigure}[b]{0.49\textwidth}
    \includegraphics[width=\textwidth,trim={10cm 2cm 10cm 1cm},clip]{"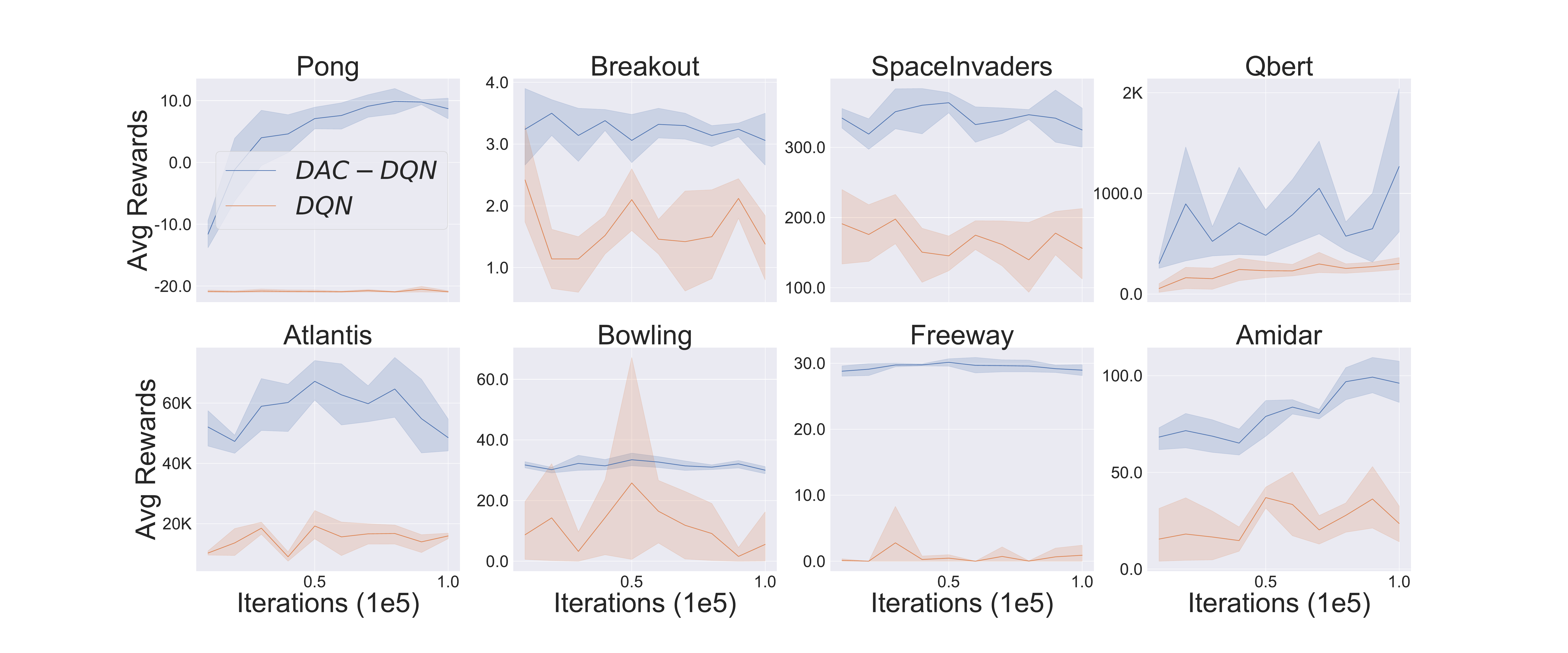"}
    \label{fig:RollingDQN}
    \vspace{-1.2em}
  \end{subfigure}
  
    \caption{ Results on Atari 100K (left)BCQ  (right) DQN. Each agent is trained for 100K iterations(training steps), and evaluated on 10 episodes every 10K steps. At each of these evaluation checkpoints, we use the internal representation to compile DAC-MDPs. We then evaluate the DAC-MDPs for $N_e=6$.  Runs averaged over 5 seeds and error bars plot the 95\% confidence interval.}
    %\vspace{-1.3em}
\end{figure}

% \vspace{-1em}
\begin{figure}[t]
 \label{fig:BCQ_DQN_100K_search}
  \begin{subfigure}[b]{0.49\textwidth}
    \includegraphics[width=\textwidth,trim={6cm 3.2cm 10cm 5.5cm},clip]{"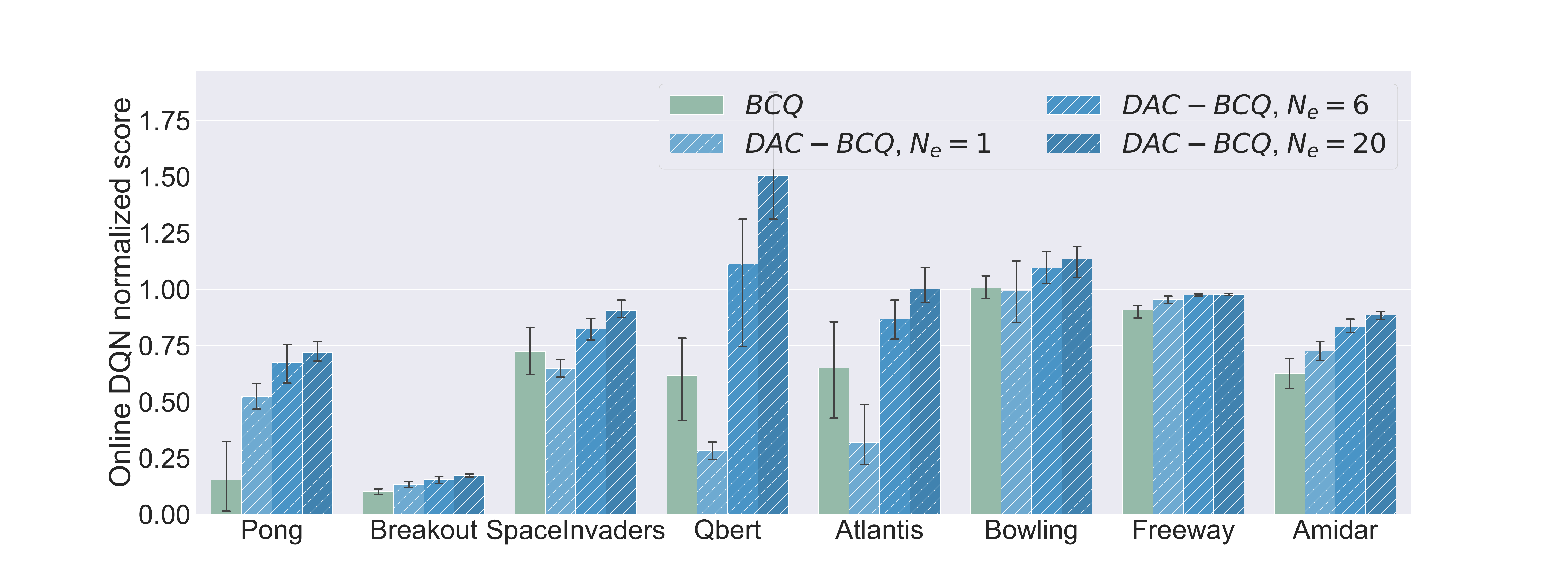"}
    \label{fig:BCQ_100K_search}
    \vspace{-1.2em}
  \end{subfigure}
  \begin{subfigure}[b]{0.49\textwidth}
    \includegraphics[width=\textwidth,trim={6cm 3.2cm 10cm 5cm},clip]{"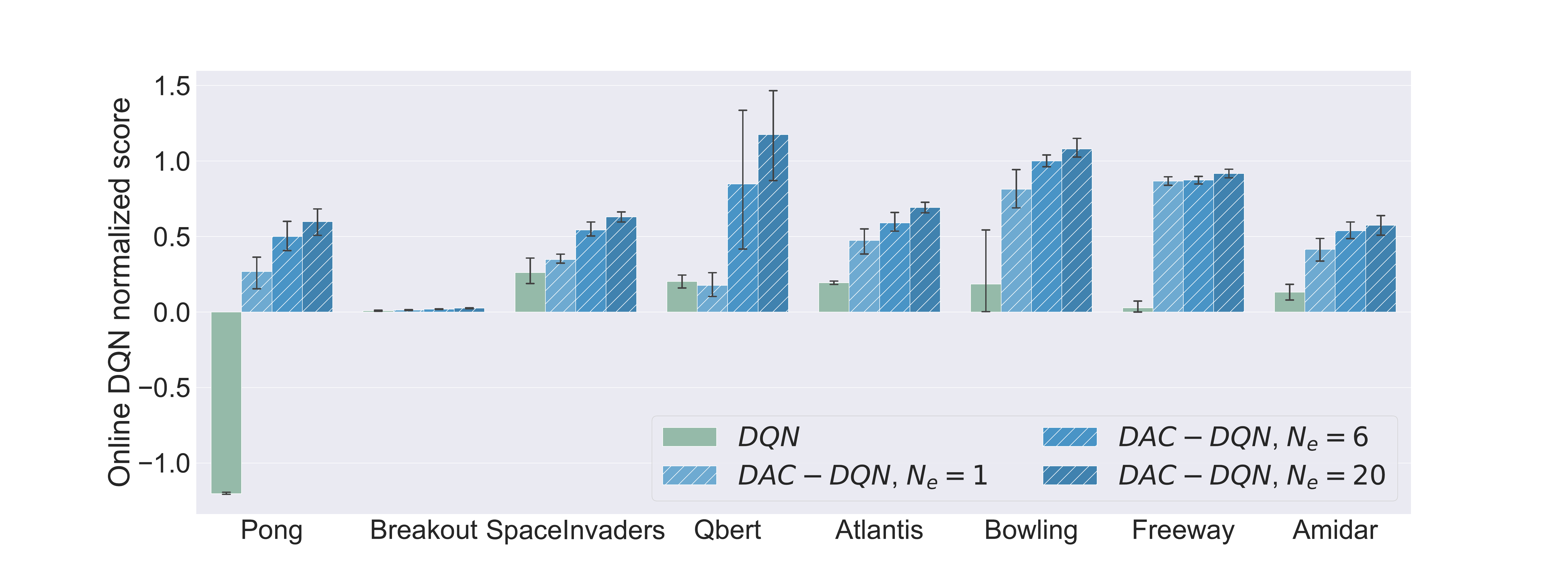"}
    \label{fig:DQN_100K_search}
  \end{subfigure}
    \caption{Results for different sets of candidate policies $N_e$ on 100K dataset. Here we plot the final performance of BCQ representation (left)  and DQN representation (right) along with the DAC-MDP performances for different values of $N_e$.  Runs averaged over 5 seeds. Error bars plot the 95\% confidence interval.}
    \vspace{-1.5em}
\end{figure}

\textbf{Experiments on Atari}: We choose 8 stochastic Atari games ranging from small to medium action spaces: Pong, Breakout, Space Invaders, Q Bert, Atlantis, Bowling, Freeway, and Amidar. These are stochastic in nature as sticky actions have been enabled.
Following recent work \citep{Fujimoto2019BenchmarkingBD}, we generate datasets by first training a DQN agent for each game. Next, to illustrate performance on small datasets and scalability to larger datasets,  we generated two datasets of sizes 100K and 2.5 million
for each game using an $\epsilon$-greedy version of the DQN policy. In particular, each episode had a 0.8 probability of using $\epsilon=0.2$ and $\epsilon=0.001$ otherwise. This ensures a mixture of both near optimal trajectories and more explorative trajectories.

We first trained offline DQN and BCQ agents on the 100K dataset for 100K iterations and evaluated on 10 test episodes every 10K iterations. At each evaluation iteration, we construct DAC-MDPs using the latent state representation at that point. We consider an offline RL setting where $N_e$$=$$6$, i.e. we evaluate 6 policies (on 10 episodes each) corresponding to 6 different DAC-MDPs, set to a parameter combination of $k$$=$$5$, $C$$\in$$\{1, 100, 1M\}$, and $k_{\pi}$$\in$$\{11,51\}$. For each representation and evalutaion point, the best performing DAC-MDP is recorded. The entire 100K iteration protocol was repeated 3 times. Figure 3 shows the averaged curves with 90\% confidence intervals. 

The main observation is that for both BCQ and DQN representations the corresponding DAC-MDP performance (labeled DAC-BCQ and DAC-DQN) is  usually better than BCQ or DQN policy at each point. Further, the DAC-MDP performance is usually better than the maximum performance of pure BCQ/DQN across the iterations. These results point to an interesting possibility of using DAC-MDPs to select the policy at the end of training at any fixed number of training iterations, rather than using online evaluations along the curve.  
Interestingly, the first point on each curve corresponds to a random network representation which is often non-trivial in many domains. 
However, we do see that in many domains the DAC-MDP performance further improves as the representation is learned, showing that the DAC-MDP framework is able to leverage improved representations. 

Figures 4 investigates the performance at the final iteration for different values of $N_e$. For $N_e = 1$ we use $k=5, k_\pi=11, C=1$ and for $N_e = 20$ we expand on the parameter set used for $N_e=6$ [$C$$\in$$\{1, 10, 100, 1000, 1M\}$, and $k_{\pi}$$\in$$\{5,11,31,51\}$]. First we see that in the majority of cases, even $N_e = 1$ is able to perform as well or better than BCQ or DQN and that increasing $N_e$ often significantly improves performance. We do see that for three games, SpaceInvaders, QBert, and Atlantis,  DAC-BCQ ($N_e = 1$) is significantly worse than the BCQ policy. This illustrates the potential challenge of hyperparameter selection in the offline framework without the benefit of additional online evaluations.

Finally we show results for all representation on the 2.5M dataset in Figure \ref{fig:AtariAllResults}. In all but one case (Breakout DAC-DQN), we see that DAC-MDP improves performance or is no worse than the corresponding BCQ and DQN policies. Further in most cases, the DAC-MDP performance improves over the Online DQN agent used to generate the dataset which was trained on 10M online transitions. While random representations perform reasonably in some cases, they are significantly outperformed by learned representations. %We see that the random representation is significantly outperformed by the learned representation, though it performs respectably in a number of cases. 
These results show, for the first time, that optimal planning can result in non-trivial performance improvement across a diverse set of Atari games. Given the simplicity of the representation learning and the difference between random and learned representations, these results suggest significant promise for further benefits from improved representations.

\begin{figure}[t] 
\centering
    \includegraphics[width=\textwidth,trim={9.5cm 1cm 10cm 1cm},clip]{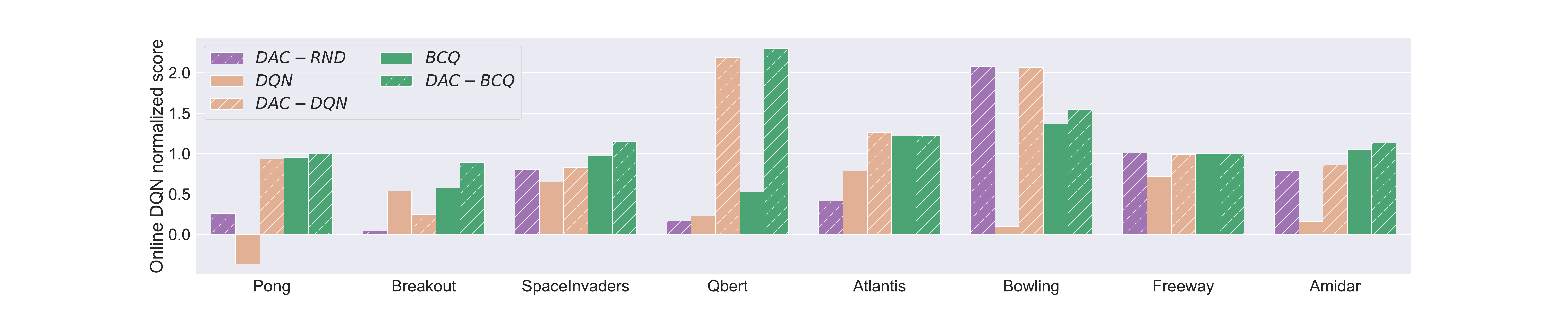}
      \caption{Atari results for 2.5M dataset. We show the final performance of BCQ and DQN trained for 2.5M iterations. We also use the same representation for the DAC-MDPs named as DAC-BCQ and DAC-DQN respectively. All DAC-MDPs are evaluated with $N_e=6$.}
\label{fig:AtariAllResults}
\vspace{-1em}
\end{figure}

 \begin{figure}[ht]
  \begin{subfigure}[b]{0.33\textwidth}
    \includegraphics[width=\textwidth]{"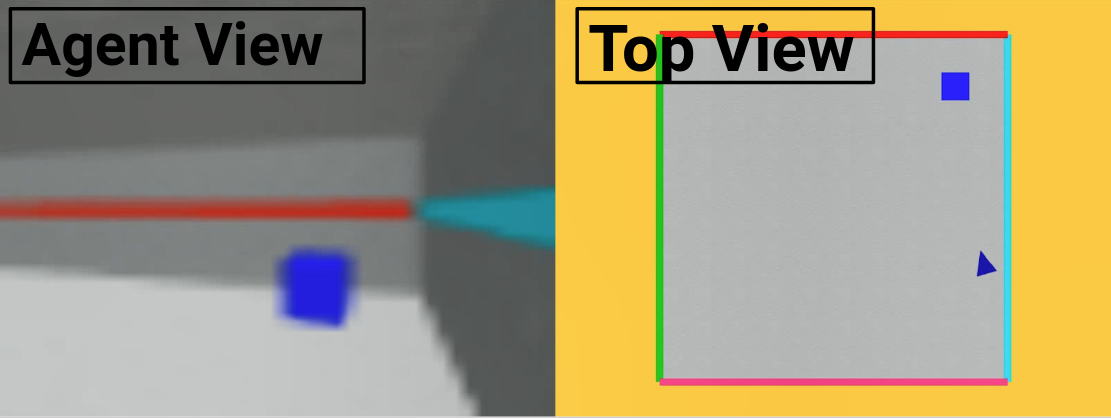"}
    \label{fig:SimpleRoomAgentView}
  \end{subfigure}
    \begin{subfigure}[b]{0.33\textwidth}
    \includegraphics[width=\textwidth]{"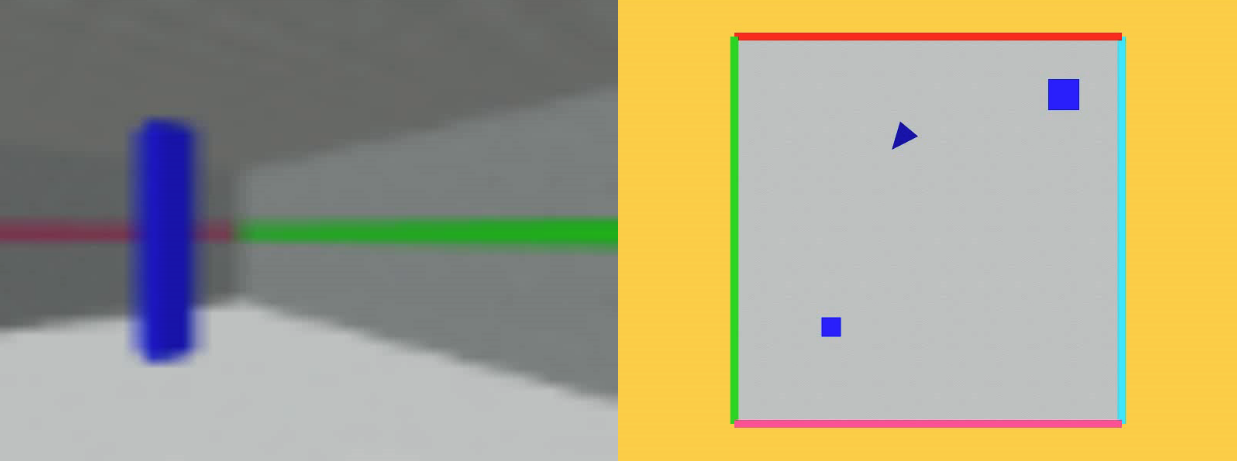"}
    \label{fig:BoxAndPillarRoomAgentView}
  \end{subfigure}
    \begin{subfigure}[b]{0.33\textwidth}
    \includegraphics[width=\textwidth]{"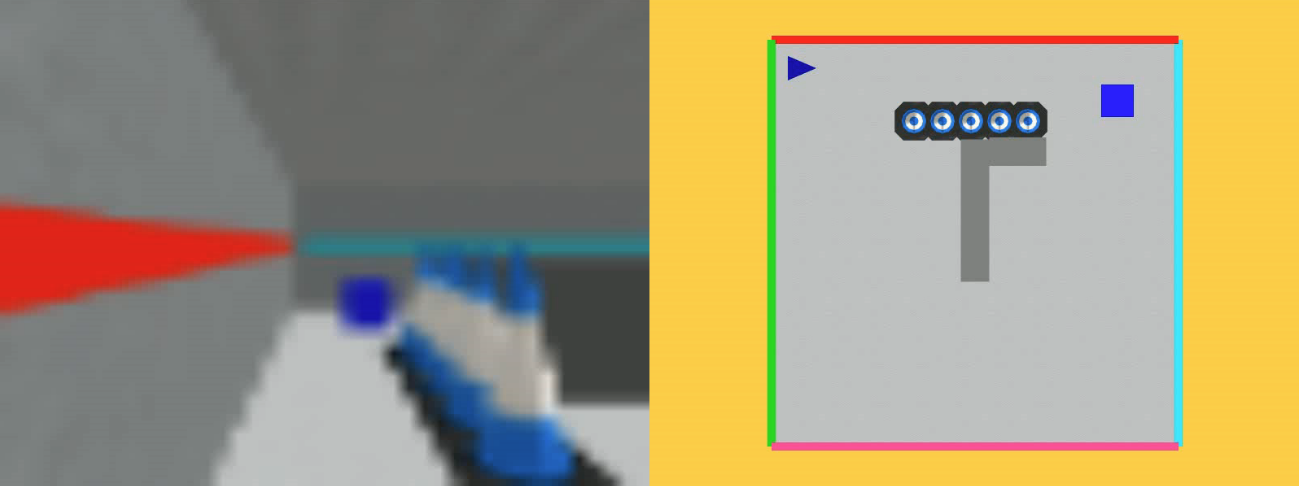"}
    \label{fig:TunnelRoomAgentView}
  \end{subfigure}
  \vspace{-2em}
    \caption{  Agent and top view for 3D Navigation domains. (left) \textit{Simple Room}, (center) \textit{Box and Pillar Room} and (right) \textit{Tunnel Room}.}
\end{figure}
\vspace{-1em}

\textbf{Illustrative Use Cases in 3D Navigation:} We now show the flexibility of an optimal planning approach using three use cases in Gym Mini-World \citep{gym_miniworld}, a first-person continuous 3D navigation environment. In all scenarios the agent has three actions: Move Forward, Turn Right, and Turn Left. Episodes lasted for a maximum of 100 time steps. The raw observations are 84x84 RGB images. We use random CNN representations in all experiments, which were effective for this environment. Our offline dataset was collected by following a random policy for 100K time steps. In these experiments we use $N_e = 2$ with $k=5, k_{\pi}\in \{11,51\}$, and $C = 0.01$. All the rooms used for the experiments and their optimal policies are visualized in fig~\ref{fig:SimpleRoomNoRightviz}. We also compare our approach against baselines(DQN/BCQ) in Appendix \ref{sec:AddtionalNavExps} and find that DAC-MDPs perform better and more consistently across the tasks. 

{\it Case 1: Adapting to Modified Action Space.} We designed a simple room with a blue box in a fixed position and an agent that is spawned in a random initial position. The agent gets a -1 reward for bumping into the walls and a terminal reward of +1 for reaching the blue box. The DAC-MDP achieves an average total reward of 0.98 using the offline dataset. Next, we simulate the event that an actuator is damaged, so that the agent is unable to turn left. For most offline RL approaches this would require retraining the agent on a modified dataset from scratch. Rather our DAC-MDP approach can account for such a scenario by simply attaching a huge penalty to all left actions. The solution to this modified DAC-MDP achieves an average score of 0.96 and is substantially different than the original since it must find revolving paths with just the turn right action. 

{\it Case 2: Varying Horizons.} We create a new room by adding a pillar in the simple room described above. Additionally, the agent gets a small non-terminal reward of 0.02 for bumping into the pillar. Depending on the placement of the agent and the expected lifetime of the agent it may be better to go for the single +1 reward or to repeatedly get the 0.02 reward. This expected lifetime can be simulated via the discount factor, which will result in different agent behavior. Typical offline RL approaches would again need to be retrained for each discount factor of interest. Rather, the DAC-MDP approach simply allows for changing the discount factor used by VI and solving for the new policy in seconds. Using a small discount factor (0.95) our short-term agent achieves an average score of 0.91 by immediately attempting to get to the box. However, the long-term agent with larger discount factor (0.995) achieves a score of 1.5 by repeatedly collecting rewards from the pillar. Qualitative observation suggests that both policies perform close to optimally. 

\textit{Case 3: Robust/Safe Policies.} Our third room adds a narrow passage to the box with cones on the side. Tunnel Room simulates the classic dilemma of whether to choose a risky or safe path. The agent can risk getting a -10 reward by bumping into the cones or can optionally follow a longer path around an obstacle to reach the goal.  Even with a relatively large dataset, small errors in the model can lead an agent along the risky path to occasionally hit a cone.
To build robustness, we find an optimal solution to a modified DAC-MDP where at each step there is a 10\% chance of taking a random action. This safe policy avoids the riskier paths where a random ``slip" might lead to disaster and always chooses the safer path around the obstacle. This achieved an average score of 0.72. In contrast, the optimal policy for the original DAC-MDP achieved an average score of 0.42 and usually selected the riskier path to achieve the goal sooner (encouraged by the discount factor).

\begin{wrapfigure}{r}{0.4\textwidth}
\hspace{-5em}
\vspace{-2em}
\begin{center}
    \includegraphics[height=0.25\textwidth, trim={1cm 0cm 1cm 1cm},clip]{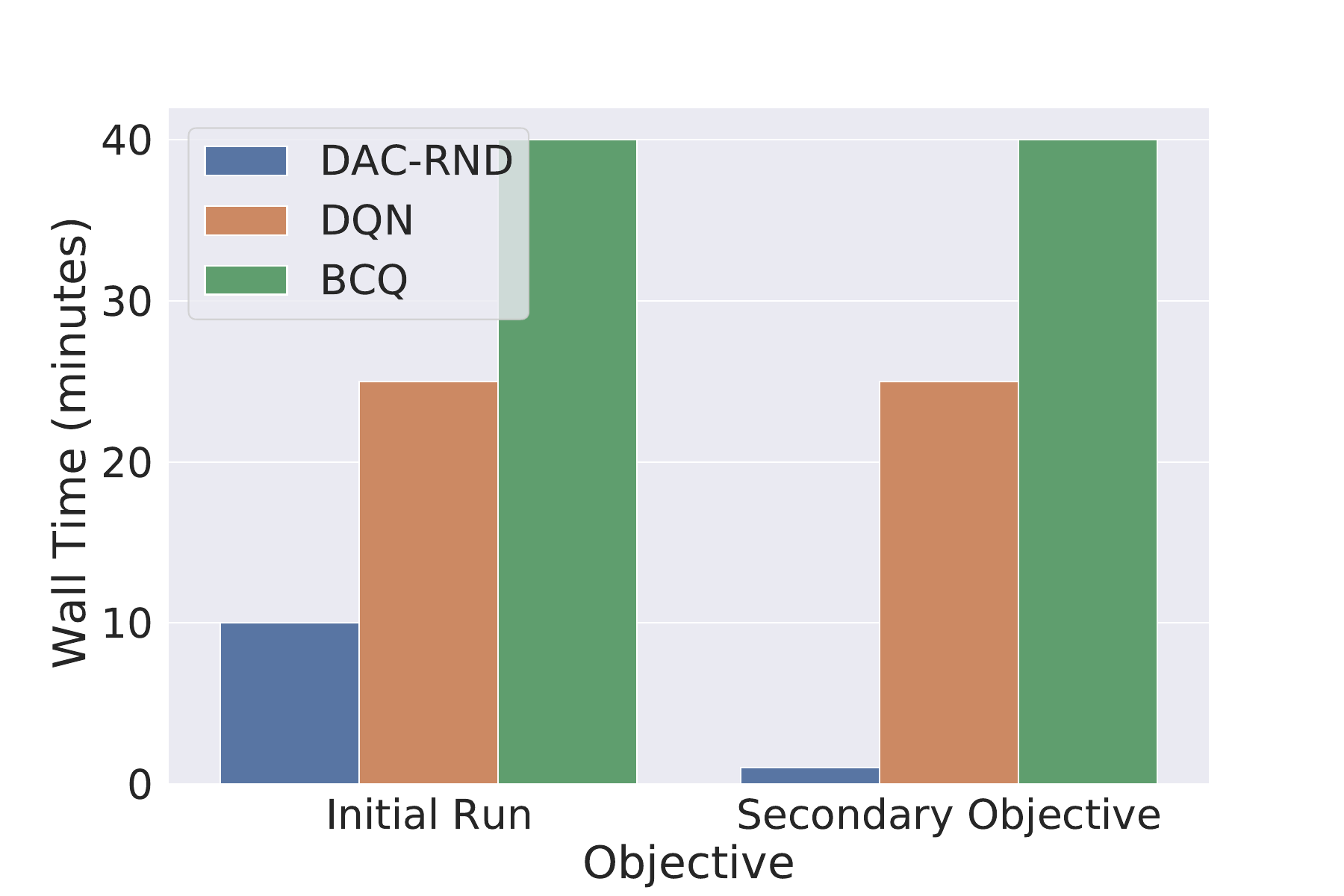}
\end{center}
\caption{Comparison of Computation Cost for different appraoches. [100k Dataset]}
\label{fig:computation_comparison}
\vspace{-2em}
\end{wrapfigure}

\emph{Computation Time Analysis:} DAC-MDP has explicit convergence guarantees; however, the stopping condition of DQN and BCQ is not clear. We follow previous works and use batch updates equal to the size of the dataset.  We define the first run for a set of hyperparameters/objective as an algorithm's \emph{initial run}. We assume that the representation is learned in the initial run and frozen for secondary objectives. Here secondary objectives can include new hyperparameter sets or goals.   Fig~\ref{fig:computation_comparison} compares the wall time of different approaches. Here we can see that DAC-MDPs are 2-4x faster for the initial run. For secondary objectives, however, we can see a wall-time speedup of anywhere between 25-40x. In the initial run of DAC-MDPs, KNN lookups\footnote{Currently, KNN lookups are performed by using CPU based KD-trees and has a throughput of \~ 5k lookups/sec. However, leveraging GPU optimized KD-trees (\citep{Martin2012ImplementationOK}, \cite{Hu2015MassivelyPK}) seems promising to reduce the wall time of the algorithm further.}  account for most of the computation time. Since we can reuse the KNN lookups for any secondary objective, the VI solver dominates the runtime for the new run, which is much faster. For a large dataset of 2.5M, it takes 60-90 minutes to compile the DAC-MDP and less than 3 minutes to solve the MDP.

The \emph{memory consumption} scales linearly with the state count, action space, and the smoothness parameter k. Fully compiled MDPs for 2.5M approximately ranges from 2-6GB depending on the action space. The current implementation is not fully optimized for memory however one can perform state space aggregation in favor of memory efficiency.

% \begin{figure}[ht] 
% \centering

% \end{figure}
\vspace{-0.5em}
\csection{Related Work}
\vspace{-0.5em}

{\bf Averagers Framework.} Our work is inspired by the original averagers framework of \cite{Gordon1995} which showed the 
equivalence of a derived MDP to a type of approximate dynamic programming. A construction similar to the derived MDP was explored in later work \citep{jong2007} within an online RL setting. 
A model similar to Gordon's derived MDP was later theoretically studied for efficient exploration in online continuous-state RL \citep{pazis2013}. Due to its exploration goals, that work introduced bonuses rather than penalties for unknown parts of the space. 
 \cite{Ormoneit2002KernelbasedRL} adapts fitted value iteration \cite{Gordon1999ApproximateST} to kernel based reinforcement learning. Furthermore \cite{Ormoneit2017KernelBasedRL} studied the theoretical convergence and consistency properties of this algorithm when combined with kernel-based regressors. Within the same framework \cite{Ernst2005TreeBasedBM} employed tree-based regression algorithms to show scalability in high-dimensional spaces. However, none of these approaches have been shown to scale to image based domains. Also, our work is the first to integrate the averagers derived MDP with pessimistic costs in order to make it applicable to offline RL. Our work is also the first to demonstrate the framework in combination with deep representations for scaling to complex observation spaces such as images. 

\textbf{Optimal Planning with Deep Representations.} Recent work \citep{Corneil2018EfficientMD} uses variational autoencoders with Gumbel softmax discretization to learn discrete latent representations. These are used to form a tabular MDP followed by optimal planning in an online RL setting. 
More recently, \cite{Pol2020PlannableAT} introduced a contrastive representation-learning and  model-learning approach that is used to construct and
solve a tabular MDP.
%which does include state-action pairs not seen in the data. 
Experimental results in both of these works were limited to a small number of domains and small datasets. 
Unlike these works, which primarily focus on representation learning, our focus here has been on providing a theoretically-grounded model, which 
can be combined 
with  representation-learning. 
Works like \cite{Kurutach2018LearningPR} are mostly focused on learning plannable representations than planning. Instead, they train a GAN to generate a series of waypoints which can then be is 
leveraged by simple feedback controllers. Also, \cite{Yang2020Plan2VecUR} and \cite{Agarwal2020FLAMBESC} attempt to incorporate long term relationships into their learned latent space and employ search algorithms such as A* and eliptical planners. 

{\bf Episodic Control and Transition Graphs.} Several prior approaches for online RL, sometimes called  episodic control, construct different types of explicit transition graphs from data that are used to drive model-free RL \citep{Blundell2016ModelFreeEC,Hansen2017DeepEV,Pritzel2017NeuralEC,Lin2018EpisodicMD,Zhu2020EpisodicRL,Marklund2020ExactA}. None of these methods, however, use solutions produced by planning as the final policy, but rather as a way of improving or bootstrapping the value estimates of model-free RL. 
The memory structures produced by these methods have a rough similarity to deterministic DAC-MDPs with $k=1$.

{\bf Pessimism for Offline Model-Free Based RL.} \cite{Fonteneau2013BatchMR} studied a ``trajectory based" simulation model for offline policy evaluation. Similar in concept to our work, pessimistic costs were used based on transition distances to ``piece together" disjoint observations, which allowed for theoretical lower-bounds on value estimates. That work, however, did not construct an MDP model that could be used for optimal planning. Most recently, in concurrent work to ours, pessimistic MDPs have been proposed for offline model-based RL \citep{Kidambi2020MOReLM,Yu2020MOPOMO}. Both approaches define MDPs that penalizes for model uncertainty based on an assumed ``uncertainty oracle" signal and derive performance bounds under the 
assumption of optimal planning. In practice, however, 
due to the difficulty of   
planning with learned deep-network models, the implementations rely on model-free RL, which  introduces an extra level of approximation.

 \textbf{Model free RL for Offline settings:}  
\cite{Agarwal2019AnOP} show that, for offline settings, stronger off policy methods stemming from distributional RL like REM and QR-DQN perform better to classic DQN. However recent offline RL methods directly tackle the problem of distributional shift in offline settings  as pointed out by \cite{atkeson1998}, and more recently \cite{Fonteneau2013BatchMR}. Works like \cite{Sutton2016AnEA},\cite{Nachum2019AlgaeDICEPG}, \cite{Buckman2020TheIO} among others can directly estimate the policy gradient for offline learning. Moreover, \cite{Fujimoto2019BenchmarkingBD}, \cite{Wang2020CriticRR}, \cite{Kumar2020ConservativeQF}, \cite{Wu2019BehaviorRO} directly constrains off policy learning towards the dataset, in turn, introducing pessimism in their estimates.

\vspace{-1em}
\csection{Summary}
\vspace{-0.5em}
This work is an effort to push the integration of deep representation learning with near-optimal planners. To this end, we proposed the Deep Averagers with Costs MDP (DAC-MDP) for offline RL as a principled way to leverage optimal planners for tabular MDPs. The key idea is to define a non-parametric continuous-state MDP based on the data, whose solution can be represented as a solution to a tabular MDP derived from the data. This construction can be integrated with any deep representation learning approach and addresses model inaccuracy by adding costs for exploiting under-represented parts of the model. Using a planner based on a GPU-implementation of value iteration, we demonstrate scalability to complex image-based environments such as Atari with relatively simple representations derived from offline model-free learners. 
%This illustrates the potential for leveraging GPUs for planning in addition to the usual deep representation learning. 
We also illustrate potential use-cases of our planning-based approach for zero-shot adaptation to changes in the environment and optimization objectives. Overall, the DAC-MDP framework has demonstrated the potential for principled integrations of planning and representation learning. There are many interesting directions to explore further, including integrating new representation-learning approaches and exploiting higher-order structure for the next level of scalability. 

%This work is an effort to push the integration of deep representation learning with near-optimal planners. To this end, we proposed the Deep Averagers with Costs MDP (DAC-MDP) and demonstrated that it can not only work in practice but also scale to complex image-based environments like Atari with relatively simple representations derived from $DQN$ or $BCQ$. We also illustrate the potential use-cases of such an approach for "zero-shot transfer" to new changes to the environment model and solve for different optimization objectives. (eg. Short or long-term planning, optimal eps-greedy planning). Moreover, DAC-MDPs provide a novel way of leveraging the GPU compute power for reasoning as well as representation learning. DAC-MDPs prove themselves a useful tool to have in the deepRL arsenal, and we hope to explore more tailored representation learning methods and serious engineering efforts in our future work.  

\bibliography{iclr2021_conference}

\begin{thebibliography}{48}
\providecommand{\natexlab}[1]{#1}
\providecommand{\url}[1]{\texttt{#1}}
\expandafter\ifx\csname urlstyle\endcsname\relax
  \providecommand{\doi}[1]{doi: #1}\else
  \providecommand{\doi}{doi: \begingroup \urlstyle{rm}\Url}\fi

\bibitem[Agarwal et~al.(2020)Agarwal, Kakade, Krishnamurthy, and
  Sun]{Agarwal2020FLAMBESC}
A.~Agarwal, Sham~M. Kakade, A.~Krishnamurthy, and W.~Sun.
\newblock Flambe: Structural complexity and representation learning of low rank
  mdps.
\newblock \emph{ArXiv}, abs/2006.10814, 2020.

\bibitem[Agarwal et~al.(2019)Agarwal, Schuurmans, and Norouzi]{Agarwal2019AnOP}
Rishabh Agarwal, D.~Schuurmans, and Mohammad Norouzi.
\newblock An optimistic perspective on offline reinforcement learning.
\newblock \emph{ICML}, 2019.

\bibitem[Atkeson(1998)]{atkeson1998}
Christopher~G Atkeson.
\newblock Nonparametric model-based reinforcement learning.
\newblock In \emph{Advances in neural information processing systems}, pp.\
  1008--1014, 1998.

\bibitem[Bellman \& Kneale(1958)Bellman and Kneale]{Kneale1958DynamicPB}
Richard Bellman and Samuel~G. Kneale.
\newblock \emph{Dynamic programming}.
\newblock Princeton University Press, 1958.

\bibitem[Blundell et~al.(2016)Blundell, Uria, Pritzel, Li, Ruderman, Leibo,
  Rae, Wierstra, and Hassabis]{Blundell2016ModelFreeEC}
Charles Blundell, B.~Uria, A.~Pritzel, Y.~Li, Avraham Ruderman, Joel~Z. Leibo,
  Jack~W. Rae, Daan Wierstra, and Demis Hassabis.
\newblock Model-free episodic control.
\newblock \emph{ArXiv}, abs/1606.04460, 2016.

\bibitem[Boutilier et~al.(2000)Boutilier, Dearden, and
  Goldszmidt]{boutilier2000}
Craig Boutilier, Richard Dearden, and Mois{\'e}s Goldszmidt.
\newblock Stochastic dynamic programming with factored representations.
\newblock \emph{Artificial intelligence}, 121\penalty0 (1-2):\penalty0 49--107,
  2000.

\bibitem[Buckman et~al.(2020)Buckman, Gelada, and Bellemare]{Buckman2020TheIO}
J.~Buckman, Carles Gelada, and Marc~G. Bellemare.
\newblock The importance of pessimism in fixed-dataset policy optimization.
\newblock \emph{ArXiv}, abs/2009.06799, 2020.

\bibitem[Chevalier-Boisvert(2018)]{gym_miniworld}
Maxime Chevalier-Boisvert.
\newblock gym-miniworld environment for openai gym.
\newblock \url{https://github.com/maximecb/gym-miniworld}, 2018.

\bibitem[Corneil et~al.(2018)Corneil, Gerstner, and
  Brea]{Corneil2018EfficientMD}
Dane~S. Corneil, W.~Gerstner, and J.~Brea.
\newblock Efficient model-based deep reinforcement learning with variational
  state tabulation.
\newblock \emph{ArXiv}, abs/1802.04325, 2018.

\bibitem[Ernst et~al.(2005)Ernst, Geurts, and Wehenkel]{Ernst2005TreeBasedBM}
D.~Ernst, P.~Geurts, and L.~Wehenkel.
\newblock Tree-based batch mode reinforcement learning.
\newblock \emph{J. Mach. Learn. Res.}, 6:\penalty0 503--556, 2005.

\bibitem[Fonteneau et~al.(2013)Fonteneau, Murphy, Wehenkel, and
  Ernst]{Fonteneau2013BatchMR}
R.~Fonteneau, S.~Murphy, L.~Wehenkel, and D.~Ernst.
\newblock Batch mode reinforcement learning based on the synthesis of
  artificial trajectories.
\newblock \emph{Annals of Operations Research}, 208:\penalty0 383--416, 2013.

\bibitem[Fujimoto et~al.(2019)Fujimoto, Conti, Ghavamzadeh, and
  Pineau]{Fujimoto2019BenchmarkingBD}
Scott Fujimoto, Edoardo Conti, Mohammad Ghavamzadeh, and Joelle Pineau.
\newblock Benchmarking batch deep reinforcement learning algorithms.
\newblock \emph{ArXiv}, abs/1910.01708, 2019.

\bibitem[Gordon(1995)]{Gordon1995}
G.~Gordon.
\newblock Stable function approximation in dynamic programming.
\newblock In \emph{ICML}, 1995.

\bibitem[Gordon \& Mitchell(1999)Gordon and Mitchell]{Gordon1999ApproximateST}
G.~Gordon and Tom~Michael Mitchell.
\newblock Approximate solutions to markov decision processes.
\newblock In \emph{Ph.D. Thesis, MIT}, 1999.

\bibitem[Gulcehre et~al.(2020)Gulcehre, Wang, Novikov, Paine, Colmenarejo,
  Zolna, Agarwal, Merel, Mankowitz, Paduraru, Dulac-Arnold, Li, Norouzi,
  Hoffman, Nachum, Tucker, Heess, and Freitas]{Gulcehre2020RLUB}
Caglar Gulcehre, Ziyu Wang, A.~Novikov, T.~L. Paine, Sergio~Gomez Colmenarejo,
  Konrad Zolna, Rishabh Agarwal, Josh Merel, Daniel~J. Mankowitz, Cosmin
  Paduraru, Gabriel Dulac-Arnold, J.~Li, Mohammad Norouzi, Matt Hoffman, Ofir
  Nachum, G.~Tucker, Nicolas Heess, and N.~D. Freitas.
\newblock Rl unplugged: Benchmarks for offline reinforcement learning.
\newblock \emph{ArXiv}, abs/2006.13888, 2020.

\bibitem[Hansen(2017)]{Hansen2017DeepEV}
Steven~Stenberg Hansen.
\newblock Deep episodic value iteration for model-based meta-reinforcement
  learning.
\newblock \emph{ArXiv}, abs/1705.03562, 2017.

\bibitem[Hu et~al.(2015)Hu, Nooshabadi, and Ahmadi]{Hu2015MassivelyPK}
Linjia Hu, S.~Nooshabadi, and M.~Ahmadi.
\newblock Massively parallel kd-tree construction and nearest neighbor search
  algorithms.
\newblock \emph{2015 IEEE International Symposium on Circuits and Systems
  (ISCAS)}, pp.\  2752--2755, 2015.

\bibitem[Jong \& Stone(2007)Jong and Stone]{jong2007}
Nicholas~K Jong and Peter Stone.
\newblock Model-based function approximation in reinforcement learning.
\newblock In \emph{Proceedings of the 6th international joint conference on
  Autonomous agents and multiagent systems}, pp.\  1--8, 2007.

\bibitem[Kidambi et~al.(2020)Kidambi, Rajeswaran, Netrapalli, and
  Joachims]{Kidambi2020MOReLM}
R.~Kidambi, A.~Rajeswaran, Praneeth Netrapalli, and T.~Joachims.
\newblock Morel : Model-based offline reinforcement learning.
\newblock \emph{ArXiv}, abs/2005.05951, 2020.

\bibitem[Kingma \& Ba(2015)Kingma and Ba]{Kingma2015AdamAM}
Diederik~P. Kingma and Jimmy Ba.
\newblock Adam: A method for stochastic optimization.
\newblock \emph{CoRR}, abs/1412.6980, 2015.

\bibitem[Kumar et~al.(2020)Kumar, Zhou, Tucker, and
  Levine]{Kumar2020ConservativeQF}
Aviral Kumar, Aurick Zhou, G.~Tucker, and Sergey Levine.
\newblock Conservative q-learning for offline reinforcement learning.
\newblock \emph{NeurIPS}, abs/2006.04779, 2020.

\bibitem[Kurutach et~al.(2018)Kurutach, Tamar, Yang, Russell, and
  Abbeel]{Kurutach2018LearningPR}
Thanard Kurutach, A.~Tamar, Ge~Yang, S.~Russell, and P.~Abbeel.
\newblock Learning plannable representations with causal infogan.
\newblock In \emph{NeurIPS}, 2018.

\bibitem[Levine et~al.(2020)Levine, Kumar, Tucker, and Fu]{Levine2020OfflineRL}
Sergey Levine, Aviral Kumar, G.~Tucker, and Justin Fu.
\newblock Offline reinforcement learning: Tutorial, review, and perspectives on
  open problems.
\newblock \emph{ArXiv}, abs/2005.01643, 2020.

\bibitem[Lin et~al.(2018)Lin, Zhao, Yang, and Zhang]{Lin2018EpisodicMD}
Zichuan Lin, Tianqi Zhao, G.~Yang, and L.~Zhang.
\newblock Episodic memory deep q-networks.
\newblock \emph{ArXiv}, abs/1805.07603, 2018.

\bibitem[Machado et~al.(2018)Machado, Bellemare, Talvitie, Veness, Hausknecht,
  and Bowling]{Machado2018RevisitingTA}
Marlos~C. Machado, Marc~G. Bellemare, Erik Talvitie, J.~Veness, Matthew~J.
  Hausknecht, and Michael~H. Bowling.
\newblock Revisiting the arcade learning environment: Evaluation protocols and
  open problems for general agents.
\newblock \emph{ArXiv}, abs/1709.06009, 2018.

\bibitem[Marklund et~al.(2020)Marklund, Nair, and Finn]{Marklund2020ExactA}
Henrik Marklund, Suraj Nair, and Chelsea Finn.
\newblock Exact (then approximate) dynamic programming for deep reinforcement
  learning.
\newblock In \emph{Bian and Invariances Workshop, ICML}, 2020.

\bibitem[Martin(2012)]{Martin2012ImplementationOK}
W.~W. Martin.
\newblock Implementation of kd-trees on the gpu to achieve real time graphics
  processing.
\newblock In \emph{CSci Senior Seminar Conference}, 2012.

\bibitem[Miche et~al.(2009)Miche, Sorjamaa, Bas, Simula, Jutten, and
  Lendasse]{miche2009}
Yoan Miche, Antti Sorjamaa, Patrick Bas, Olli Simula, Christian Jutten, and
  Amaury Lendasse.
\newblock Op-elm: optimally pruned extreme learning machine.
\newblock \emph{IEEE transactions on neural networks}, 21\penalty0
  (1):\penalty0 158--162, 2009.

\bibitem[Mnih et~al.(2013)Mnih, Kavukcuoglu, Silver, Graves, Antonoglou,
  Wierstra, and Riedmiller]{Mnih2013PlayingAW}
V.~Mnih, K.~Kavukcuoglu, D.~Silver, A.~Graves, Ioannis Antonoglou, Daan
  Wierstra, and Martin~A. Riedmiller.
\newblock Playing atari with deep reinforcement learning.
\newblock \emph{ArXiv}, abs/1312.5602, 2013.

\bibitem[Mnih et~al.(2015)Mnih, Kavukcuoglu, Silver, Rusu, Veness, Bellemare,
  Graves, Riedmiller, Fidjeland, Ostrovski, Petersen, Beattie, Sadik,
  Antonoglou, King, Kumaran, Wierstra, Legg, and
  Hassabis]{Mnih2015HumanlevelCT}
V.~Mnih, K.~Kavukcuoglu, D.~Silver, Andrei~A. Rusu, J.~Veness, Marc~G.
  Bellemare, A.~Graves, Martin~A. Riedmiller, Andreas~K. Fidjeland, Georg
  Ostrovski, S.~Petersen, C.~Beattie, A.~Sadik, Ioannis Antonoglou, H.~King,
  D.~Kumaran, Daan Wierstra, S.~Legg, and Demis Hassabis.
\newblock Human-level control through deep reinforcement learning.
\newblock \emph{Nature}, 518:\penalty0 529--533, 2015.

\bibitem[Nachum et~al.(2019)Nachum, Dai, Kostrikov, Chow, Li, and
  Schuurmans]{Nachum2019AlgaeDICEPG}
Ofir Nachum, Bo~Dai, Ilya Kostrikov, Yinlam Chow, L.~Li, and D.~Schuurmans.
\newblock Algaedice: Policy gradient from arbitrary experience.
\newblock \emph{OPTRL}, abs/1912.02074, 2019.

\bibitem[{\'o}r~J{\'o}hannsson(2009)]{Jhannsson2009GPUbasedMD}
{\'A}rs{\ae}ll {\'o}r~J{\'o}hannsson.
\newblock Gpu-based markov decision process solver.
\newblock In \emph{Msc. Thesis Reykjavik University}, 2009.

\bibitem[Ormoneit \& Glynn(2002)Ormoneit and Glynn]{Ormoneit2002KernelbasedRL}
Dirk Ormoneit and P.~Glynn.
\newblock Kernel-based reinforcement learning in average-cost problems.
\newblock \emph{IEEE Trans. Autom. Control.}, 47:\penalty0 1624--1636, 2002.

\bibitem[Ormoneit \& Sen(2002)Ormoneit and Sen]{Ormoneit2017KernelBasedRL}
Dirk Ormoneit and Ś. Sen.
\newblock Kernel-based reinforcement learning.
\newblock In \emph{Machine Learning}, 2002.

\bibitem[Pazis \& Parr(2013)Pazis and Parr]{pazis2013}
Jason Pazis and Ronald Parr.
\newblock Pac optimal exploration in continuous space markov decision
  processes.
\newblock In \emph{Twenty-Seventh AAAI Conference on Artificial Intelligence},
  2013.

\bibitem[Pritzel et~al.(2017)Pritzel, Uria, Srinivasan, Badia, Vinyals,
  Hassabis, Wierstra, and Blundell]{Pritzel2017NeuralEC}
A.~Pritzel, B.~Uria, S.~Srinivasan, Adri{\`a}~Puigdom{\`e}nech Badia, Oriol
  Vinyals, Demis Hassabis, Daan Wierstra, and Charles Blundell.
\newblock Neural episodic control.
\newblock In \emph{ICML}, 2017.

\bibitem[Puterman(1994)]{Puterman1994MarkovDP}
Martin~L. Puterman.
\newblock Markov decision processes: Discrete stochastic dynamic programming.
\newblock In \emph{Wiley Series in Probability and Statistics}, 1994.

\bibitem[Raghavan et~al.(2012)Raghavan, Joshi, Fern, Tadepalli, and
  Khardon]{raghavan2012}
Aswin Raghavan, Saket Joshi, Alan Fern, Prasad Tadepalli, and Roni Khardon.
\newblock Planning in factored action spaces with symbolic dynamic programming.
\newblock In \emph{AAAI}. Citeseer, 2012.

\bibitem[{Ruiz} \& {Hernández}(2015){Ruiz} and {Hernández}]{7429422}
S.~{Ruiz} and B.~{Hernández}.
\newblock A parallel solver for markov decision process in crowd simulations.
\newblock In \emph{2015 Fourteenth Mexican International Conference on
  Artificial Intelligence (MICAI)}, pp.\  107--116, 2015.

\bibitem[Sutton et~al.(2016)Sutton, Mahmood, and White]{Sutton2016AnEA}
R.~Sutton, A.~R. Mahmood, and Martha White.
\newblock An emphatic approach to the problem of off-policy temporal-difference
  learning.
\newblock \emph{ML}, abs/1503.04269, 2016.

\bibitem[Toth et~al.(2017)Toth, O'Rourke, and Goodman]{toth2017handbook}
Csaba~D Toth, Joseph O'Rourke, and Jacob~E Goodman.
\newblock \emph{Handbook of discrete and computational geometry}.
\newblock CRC press, 2017.

\bibitem[van~der Pol et~al.(2020)van~der Pol, Kipf, Oliehoek, and
  Welling]{Pol2020PlannableAT}
Elise van~der Pol, Thomas Kipf, Frans~A. Oliehoek, and M.~Welling.
\newblock Plannable approximations to mdp homomorphisms: Equivariance under
  actions.
\newblock In \emph{AAMAS}, 2020.

\bibitem[Wang et~al.(2020)Wang, Novikov, Zolna, Springenberg, Reed, Shahriari,
  Siegel, Merel, Gulcehre, Heess, and Freitas]{Wang2020CriticRR}
Ziyu Wang, A.~Novikov, Konrad Zolna, Jost~Tobias Springenberg, Scott Reed,
  B.~Shahriari, N.~Siegel, Josh Merel, Caglar Gulcehre, Nicolas Heess, and
  N.~D. Freitas.
\newblock Critic regularized regression.
\newblock \emph{Neurips}, abs/2006.15134, 2020.

\bibitem[Wu et~al.(2019)Wu, Tucker, and Nachum]{Wu2019BehaviorRO}
Y.~Wu, G.~Tucker, and Ofir Nachum.
\newblock Behavior regularized offline reinforcement learning.
\newblock \emph{ArXiv}, abs/1911.11361, 2019.

\bibitem[Wu et~al.(2016)Wu, Hahn, G{\"u}nay, Zhang, and
  Liu]{Wu2016GPUAcceleratedVI}
Zhimin Wu, E.~Hahn, A.~G{\"u}nay, L.~Zhang, and Y.~Liu.
\newblock Gpu-accelerated value iteration for the computation of reachability
  probabilities in mdps.
\newblock In \emph{ECAI}, 2016.

\bibitem[Yang et~al.(2020)Yang, Zhang, Morcos, Pineau, Abbeel, and
  Calandra]{Yang2020Plan2VecUR}
G.~Yang, A.~Zhang, Ari~S. Morcos, Joelle Pineau, P.~Abbeel, and R.~Calandra.
\newblock Plan2vec: Unsupervised representation learning by latent plans.
\newblock \emph{ArXiv}, abs/2005.03648, 2020.

\bibitem[Yu et~al.(2020)Yu, Thomas, Yu, Ermon, Zou, Levine, Finn, and
  Ma]{Yu2020MOPOMO}
Tianhe Yu, G.~Thomas, Lantao Yu, S.~Ermon, J.~Zou, Sergey Levine, Chelsea Finn,
  and Tengyu Ma.
\newblock Mopo: Model-based offline policy optimization.
\newblock \emph{ArXiv}, abs/2005.13239, 2020.

\bibitem[Zhu et~al.(2020)Zhu, Lin, Yang, and Zhang]{Zhu2020EpisodicRL}
Guangxiang Zhu, Zichuan Lin, G.~Yang, and C.~Zhang.
\newblock Episodic reinforcement learning with associative memory.
\newblock In \emph{ICLR}, 2020.

\end{thebibliography}
\bibliographystyle{iclr2021_conference}

\newpage
\appendix
\section{Appendix}

\subsection{Practical Issues}
\label{sec:Apppendix_practical_issues}
We now describe some of the key practical issues of the framework and how we address them. % in our experiments. 
%The actual implementation of the method deviates from the theory in a few key areas. 

\textbf{Optimal Planning.} We use value iteration (VI) \citep{Kneale1958DynamicPB} to optimally solve the core-state MDPs, which can have a number of states equal to the size of the dataset $N$. While in general storing a transition matrix with $N$ states can take $O(n^2)$ space, the core-state MDPs have sparse transitions with each state having at most $k$ successors, which our implementation exploits. Further, VI is highly parallelizable on GPUs, since the update of each state at each VI iteration can be done independently. We developed a simple GPU implementation of VI, which can provide between 20-1000x wall clock speedup over its serial counterpart. Our current implementation can solve MDPs with a million states in less than 30 seconds and easily scale to MDPs with several million states and medium action spaces (up to 10). Thus, this implementation is adequate for the dataset sizes that can be expected in many offline RL settings, where often we are interested in performing well given limited data. For extremely large datasets, we expect that future work can develop effective sub-sampling approaches, similar to those used for scaling other non-parametric models, e.g. \cite{miche2009}. We will release an open-source implementation and further details are in Appendix \ref{sec:AppendixScalingVI}.

\textbf{Weighted Averaging:}  The above DAC-MDP construction and theory is based on uniform averaging across kNN sets. In practice, however, most non-parametric regression techniques use weighted averaging, where the influence of a neighbor decreases with its distance. Our implementation also uses weighted averaging, which we found has little impact on overall performance (see Appendix), but can reduce sensitivity to the choice of $k$, especially for smaller datasets. In our implementation we compute normalized weights over kNN sets according to the inverse of distances. In particular, the modified DAC-MDP reward and transition functions become, 
\begin{align}
    \tilde{R}(s,a) &= \sum_{i\in kNN(s,a)} {\alpha(s,a,s_i,a_i)} (r_i - C\cdot d(s,a,s_i,a_i) )\\
    \tilde{T}(s,a,s') &= \sum_{i \in kNN(s,a)} \alpha(s,a,s_i,a_i) \; I[s' = s'_i], 
\end{align}
where  $\alpha(s,a,s_i, a_i) = \frac{d'(s,a,s_i,a_i) }{\sum_{j\in kNN(s,a)} d'(s,a,s_j,a_j)}$, $d'(s,a,s_i, a_i) = \frac{1}{d(s,a,s_i,a_i) + \delta_d }$ and  $\delta_d = 1e^{-5}$

\textbf{Optimized Policy Computation.} After solving the DAC-MDP, the policy computation for new states requires computing Q-values for each action via Equation \ref{eq:DAC-Q}, which involves a kNN query for each action. We find that we can reduce this computation by a factor of $|A|$ via  the following heuristic that performs just one kNN query at the state level. Here $kNN(s)$ returns the data tuple of indices whose source states are the $k$ nearest neighbors of $s$. 
 \begin{align}
     \tilde{\pi}(s) &=\max_{a\in A} \frac{1}{k}  \sum_{i\in kNN(s)} \alpha(s,s_i)\;\tilde{Q^*}(s'_i,a)
 \end{align} where, $d'(s,s_i) = \frac{1}{d(s,s_i) + \delta_d }$, $\alpha(s,s_i) = \frac{d'(s,s_i) }{\sum_{i\in kNN(s)} d'(s,s_i)}$ and  $\delta_d$ is set to $1e^{-5}$. We have found that this approach rarely hurts performance and results in the same action choices. However, when there is limited data, there is some evidence that this computation can actually help since it avoids splitting the dataset across actions (see the Appendix for ablation study). 

\textbf{Hyperparameter Selection.} The choice of the cost parameter $C$ can significantly influence the DAC-MDP performance. At one extreme, if $C$ is large, then the resulting policy will focus on ``staying close" to the dataset. At the other extreme when $C=0$, there is no penalty for exploiting the under-represented parts of the model. In general, the best choice will lie between the extremes and must be heuristically and/or empirically selected. Here, We use the simple rule-of-thumb of setting $C$ to be in the order of magnitude of the observed rewards, to avoid exploitation but not dominate the policy. In addition, if the online evaluation budget $N_e$ (see Section \ref{sec:preliminaries}) is greater than one, we also consider values that are orders of magnitude apart to span qualitatively different ranges. 

%to consider choices an orders of magnitude above and below. Our heuristic aims to select a value such such that policy choices are not dominated by the costs across all states. In particular, we select $C \approx Q_{max}/d_{P_{m}}$,  where $P_m$ indicates the $m^{th}$ percentile of the kNN distances over the dataset. The heuristic aims to balance entails that the resulting policy will not choose an unseen state action in at least $(100-m)$\% of the states.  

The DAC-MDP construction used the same smoothness parameter $k$ for both building the MDP and computing the policy for unknown states via Equation \ref{eq:DAC-Q}. We have found it is useful to use different values and in particular, there can be a benefit for using a larger value for the later use to reduce variance. Thus, our experiments will specify a value of $k$ used to construct the MDP and a value $k_{\pi}$ used to compute the policy. Our default setting is $k=5$ and $k_{\pi}=11$, with more values possibly considered depending on the evaluation budget $N_e$.

\subsection{Theoretical Proofs}

\label{sec:Theoretical_proofs}
Let $\tilde{\pi}$ bet the optimal policy of a DAC-MDP. We wish to bound the value $V^{\tilde{\pi}}(s)$ of $\tilde{\pi}$ in the true MDP in terms of the optimal value $V^*(s)$ for any state $s$. Using the following lemma from \cite{pazis2013} it is sufficient to bound the \emph{Bellman Error} $\tilde{Q}(s,a) - B[\tilde{Q}](s,a)$ of $\tilde{Q}$ across all $s$ and $a$ with respect to the true MDP. 

\vspace{1em}
\begin{lemma}
\label{lemma:bellman-error}\citep{pazis2013},Theorem 3.12
For any Q-function $Q$ with greedy policy $\pi$, if for all $s\in \mathcal{S}$ and $a\in \mathcal{A}$, $-\epsilon_{-} \leq Q(s,a) - B[Q](s,a) \leq \epsilon_+$, then for all $s\in \mathcal{S}$, $$V^{\pi}(s) \geq V^*(s) - \frac{\epsilon_- + \epsilon_+}{1-\gamma}$$. 
\end{lemma}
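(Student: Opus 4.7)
The plan is to bound $V^{\tilde{\pi}}$ below by $V^*$ via a two-step chain through the auxiliary value function $V_Q(s) := \max_a Q(s,a)$, which equals $Q(s,\pi(s))$ because $\pi$ is greedy with respect to $Q$. The two-sided Bellman-error hypothesis splits naturally into one inequality that relates $V_Q$ to $V^\pi$ (via the policy-specific Bellman operator $B^\pi$) and another that relates $V_Q$ to $V^*$ (via the optimal Bellman operator $B^*$), and I would derive each separately before combining them.

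First I would establish $V^\pi \geq V_Q - \frac{\epsilon_+}{1-\gamma}\mathbf{1}$. Starting from the upper half of the hypothesis, $Q(s,a) \leq B[Q](s,a) + \epsilon_+$, I specialize to $a = \pi(s)$ to get $V_Q(s) \leq (B^\pi V_Q)(s) + \epsilon_+$. Then I iterate: since $B^\pi$ is monotone and obeys $B^\pi(V + c\mathbf{1}) = B^\pi V + \gamma c \mathbf{1}$, repeatedly applying it to both sides and using $(B^\pi)^n V_Q \to V^\pi$ produces the bound after summing the geometric series $\sum_{i=0}^{\infty} \gamma^i \epsilon_+$.

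Second I would establish $V_Q \geq V^* - \frac{\epsilon_-}{1-\gamma}\mathbf{1}$. From the lower half, $Q(s,a) \geq B[Q](s,a) - \epsilon_-$, I take $\max_a$ on both sides; the left becomes $V_Q(s)$ and the right becomes $(B^* V_Q)(s) - \epsilon_-$, using $\max_a B[Q](s,a) = (B^* V_Q)(s)$ directly from the definition of the optimal Bellman operator. An identical iteration argument, now with $B^*$ in place of $B^\pi$ and using $(B^*)^n V_Q \to V^*$, yields the bound. Chaining the two inequalities delivers the claim.

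There is no serious obstacle. The only subtlety is verifying that both Bellman operators are monotone and commute correctly with additive constants so that the geometric series propagates cleanly through the iteration --- both properties follow directly from the Bellman definitions. Notably, the argument is purely structural: it does not use that $Q$ is the DAC-MDP Q-function, only that $\pi$ is exactly greedy with respect to $Q$, which is what makes the upper-comparison step collapse into a statement about $B^\pi$ rather than $B^*$.
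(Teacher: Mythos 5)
Your argument is correct: the specialization of the upper inequality to $a=\pi(s)$ gives $V_Q \leq B^\pi V_Q + \epsilon_+\mathbf{1}$, the maximization of the lower inequality gives $V_Q \geq B^*V_Q - \epsilon_-\mathbf{1}$, and both iterate cleanly via monotonicity and the $\gamma$-scaling of additive constants to yield the two geometric-series bounds, which chain to the stated result. Note, however, that the paper does not prove this lemma at all --- it imports it by citation from \cite{pazis2013} (Theorem 3.12) --- so there is no in-paper proof to compare against; your write-up is the standard self-contained derivation of that cited result and is sound as stated.
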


In general, however, the Bellman Error can be arbitrarily large without further assumptions. Thus, in this work, we make Lipschitz smoothness assumptions on $B[\tilde{Q}]$. In particular, we assume that there is a constant $L(k,C)$ such that for any state-action pairs $(s,a)$ and $(s',a')$ we have $$\left|B[\tilde{Q}](s,a) - B[\tilde{Q}](s',a')\right| \leq L(k,C)\cdot d(s,a,s',a').$$ 

Given the smoothness assumption for any $(s,a)$, data sample $(s_i,a_i,r_i,s'_i)$, and a constant $L$ we define a constant $\Delta_{i}(s,a)$ such that $$B[\tilde{Q}](s,a) = B[\tilde{Q}](s_i,a_i) - \Delta_{i}(s,a).$$ Note that based on the smoothness assumption we have that $\left|\Delta_i(s,a)\right| \leq L(k,C)\cdot d(s,a,s_i,a_i)$. 

Using this definition we introduce a new operator $\hat{B}$, which will allow us to relate $\tilde{B}$ to $B$.  
\begin{align}
    \hat{B}[Q](s,a) = \frac{1}{k}\sum_{i\in kNN(s,a)} r_i + \gamma \max_{a'} Q(s'_i,a') - \Delta_{i}(s,a).
\end{align}
The following lemma shows that this operator approximates $B$ with high probability.

\vspace{1em}
\begin{lemma}
 \label{lemma:sampling}
For any data set $\mathcal{D}$ of size $N$, value of $k$, and any cost parameter $C$, if $\tilde{Q}$ is the optimal Q-function of the corresponding DAC MDP, then with probability at least $1-\delta$, for all $(s,a)\in \mathcal{S}\times\mathcal{A}$, 
\begin{align*}
    \hat{B}[\tilde{Q}](s,a)-B[\tilde{Q}](s,a) & \leq Q_{max} \epsilon(k,N,\delta)\text{ for all }(s,a)\in \mathcal{S}\times\mathcal{A} \\
%    \epsilon(k,N,\delta) &= Q_{max}\sqrt{\frac{2}{k}\ln{\frac{ 2M_{N,k}}{\delta}}}
    \epsilon(k,N,\delta) &= \sqrt{\frac{1}{2k}\ln{\frac{2 M_{N,k} }{\delta}}}
\end{align*} 
where $Q_{max} = \max_{(s,a)} \tilde{Q}(s,a)$ and $n$ is the dimensionality of the state-action encoding. 
% {\bf NOTE: We need to do a better job introducing the dimensionality $n$. Perhaps we should have a general result that uses $M_{N,k}$ and then a special case that is for Euclidean distance between $n$-dimensional vector encodings of $(s,a)$ pairs.}
\end{lemma}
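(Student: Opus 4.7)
The plan is to express the difference $\hat{B}[\tilde{Q}](s,a) - B[\tilde{Q}](s,a)$ as an empirical average of bounded, conditionally mean-zero random variables, apply Hoeffding's inequality, and then union-bound over the at most $M_{N,k}$ distinct $k$NN configurations. The first step is purely algebraic: by definition of $\Delta_i$, the identity $B[\tilde{Q}](s,a) = B[\tilde{Q}](s_i,a_i) - \Delta_i(s,a)$ holds for every index $i$, so averaging over $i \in kNN(s,a)$ gives the representation $B[\tilde{Q}](s,a) = \frac{1}{k}\sum_i [B[\tilde{Q}](s_i,a_i) - \Delta_i(s,a)]$. Subtracting from the definition of $\hat{B}[\tilde{Q}](s,a)$, the $\Delta_i(s,a)$ terms cancel exactly and I am left with $\hat{B}[\tilde{Q}](s,a) - B[\tilde{Q}](s,a) = \frac{1}{k}\sum_{i \in kNN(s,a)} X_i$, where $X_i := r_i + \gamma \max_{a'} \tilde{Q}(s'_i,a') - B[\tilde{Q}](s_i,a_i)$. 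This reduction is what makes the Lipschitz constant $L(k,C)$ disappear from this particular lemma: the smoothness-driven $\Delta_i$ term is folded into $B$ on the left and into $\hat{B}$ on the right, leaving only the pure sampling error.

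Next I would run the concentration argument on $\frac{1}{k}\sum_i X_i$. Conditional on the source $(s_i, a_i)$, the pair $(r_i, s'_i)$ is drawn from the true reward and transition distributions, so by the defining equation for $B$ we have $\mathbb{E}[X_i \mid s_i, a_i] = 0$, and each $X_i$ lies in an interval of length at most $Q_{max}$ (after absorbing reward ranges into $Q_{max}$). Because $(r_i, s'_i)$ are drawn independently for distinct data indices, the $X_i$'s are independent. A one-sided Hoeffding bound then gives, for any \emph{fixed} $k$-subset $S$ of data indices, $\Pr(\frac{1}{k}\sum_{i\in S} X_i \geq t) \leq \exp(-2k t^2 / Q_{max}^2)$.

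The third step is the union bound. The difference $\hat{B}[\tilde{Q}](s,a) - B[\tilde{Q}](s,a)$ depends on $(s,a)$ only through the set $kNN(s,a)$, and by hypothesis the map $(s,a)\mapsto kNN(s,a)$ has at most $M_{N,k}$ distinct images. Union-bounding the Hoeffding tail over these $M_{N,k}$ sets, with an extra factor of two to absorb a symmetric two-sided version if needed, and solving the resulting inequality for $t$ gives exactly $t = Q_{max}\sqrt{\frac{1}{2k}\ln\frac{2 M_{N,k}}{\delta}} = Q_{max}\,\epsilon(k,N,\delta)$, matching the lemma statement.

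The delicate point — and what I expect to be the main obstacle — is that $\tilde{Q}$ is itself a function of the dataset (it is the optimal Q-function of a DAC-MDP built from $\mathcal{D}$), so the $s'_i$ appearing inside $\tilde{Q}(s'_i,\cdot)$ are the very same random variables being summed over. To make the mean-zero step rigorous one must decouple $\tilde{Q}$ from the realized $(r_i, s'_i)$; the cleanest way is to exploit the fact that any DAC-MDP-optimal Q-function is determined by its finite-dimensional vector of core-state values (bounded in $[-Q_{max}, Q_{max}]$), and then to argue uniformly over an appropriate finite cover of this class, with any additional log-factors absorbed into $M_{N,k}$ or into constants so that the stated bound still goes through. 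Without this decoupling step the concentration argument is only heuristic; with it, the three-step plan above closes the proof.
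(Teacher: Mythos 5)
Your proposal follows essentially the same route as the paper's proof: write $\hat{B}[\tilde{Q}](s,a)-B[\tilde{Q}](s,a)$ as an average of $k$ bounded random variables whose (conditional) mean is zero by the definition of $\Delta_i$, apply Hoeffding with range $Q_{max}$, and union-bound over the at most $M_{N,k}$ distinct $k$NN sets with $\delta'=\delta/M_{N,k}$. The ``delicate point'' you flag --- that $\tilde{Q}$ is itself built from the realized $(r_i,s'_i)$, so the summands are not genuinely independent of the function being concentrated --- is a real subtlety that the paper's proof silently passes over by treating $\tilde{Q}$ as fixed, so your awareness of it (and sketched covering-argument fix) goes beyond, rather than falls short of, the published argument.
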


\begin{proof}
In order to capture the variance of $\hat{B}[\tilde{Q}]$ associated with the transition dynamics, for each state-action pair $(s,a)$ and each nearest neighbor index $i\in kNN(s,a)$, define a random variable $X_i(s,a) = r_i + \gamma \max_{a'} \tilde{Q}(S'_i,a') - \Delta_i(s,a)$, where $S'_i \sim T(s_i,a_i,\cdot)$. Note that each term of $\hat{B}[\tilde{Q}](s,a)$ is a single sample of one $X_i(s,a)$. That is, $$\hat{B}[\tilde{Q}](s,a) = \frac{1}{k}\sum_{i\in kNN(s,a)} x_i(s,a), \mbox{ where } x_i(s,a)\sim X_i(s,a).$$ Also note that according to the definition of $\Delta_i(s,a)$ the expected value of each $X_i(s,a)$ is equal to $B[\tilde{Q}](s,a)$. 
\begin{align*}
    \mathbb{E}\left[X_i(s,a)\right] & = r_i + \gamma \mathbb{E}\left[\max_{a'} \tilde{Q}(S'_i,a')\right] - \Delta_i(s,a) \\
    & = B[\tilde{Q}](s_i,a_i) - \Delta_i(s,a) \\
    & = B[\tilde{Q}](s,a)
\end{align*}
Accordingly $\mathbb{E}\left[\hat{B}[\tilde{Q}](s,a)\right] = B[\tilde{Q}](s,a)$.

From the above, we can apply the Hoeffding inequality, which states that for $k$ independent random variables $X_1,\ldots, X_k$ with bounded support $a_i\leq X_i \leq b_i$, if $\bar{X} = \frac{1}{k}\sum_i X_i$ is the empirical mean, then for all $\epsilon > 0$, $$Pr \left( \left| \bar{X} - \mathbb{E}\left[\bar{X}\right] \right| \geq \epsilon\right)  \leq 2\exp\left(\frac{-2k^2\epsilon^2}{\sum_i (b_i - a_i)^2}\right).$$ Applying this bound to $\hat{B}[\tilde{Q}](s,a)$ implies that: $$Pr \left( \left| \hat{B}[\tilde{Q}](s,a) - B[\tilde{Q}](s,a) \right| \geq \epsilon\right)  \leq 2\exp\left(\frac{-2k\epsilon^2}{Q^2_{max}}\right),$$ which can be equivalently written as 
% $$Pr\left( \left| \hat{B}[\tilde{Q}](s,a) - B[\tilde{Q}](s,a) \right| \geq Q_{max}\sqrt{\frac{2}{k}\ln{\frac{2}{\delta'}}}\right) \leq \delta'.$$
$$Pr\left( \left| \hat{B}[\tilde{Q}](s,a) - B[\tilde{Q}](s,a) \right| \geq Q_{max}\sqrt{\frac{1}{2k}\ln{\frac{2}{\delta'}}}\right) \leq \delta'.$$
This bound holds for individual $s,a$ pairs. However, we need to bound the probability across all $s,a$ pairs. To do this note that the computed value $\hat{B}[\tilde{Q}](s,a)$ is based on the nearest neighbor set $kNN(s,a)$ and let $M_{N,k}$ denote an upper bound on the possible number of those sets across all $\mathcal{S}\times\mathcal{A}$. To ensure that the bound holds simultaneously for all such sets, we can apply the union bound using $\delta'=\delta/M_{N,k}$. This bounds the probability over all state-action pairs simultaneously by  $\delta$. 

$$Pr\left( \left| \hat{B}[\tilde{Q}](s,a) - B[\tilde{Q}](s,a) \right| \geq Q_{max}\sqrt{\frac{1}{2k}\ln{\frac{2 M_{N,k} }{\delta}}}\right) \leq \delta.$$
\end{proof}

\textbf{Theorem \ref{theorem:main}} For any data set $\mathcal{D}$ of size $N$, let $\tilde{Q}$ and $\tilde{\pi}$ be the optimal Q-function and policy for the corresponding DAC-MDP with parameters $k$ and $C$. If $B[\tilde{Q}]$ is Lipschitz continuous with constant $L(k,C)$, then with probability at least $1-\delta$, \begin{align*}
    V^{\tilde{\pi}} & \geq V^* - \frac{2\left(L(k,C)\cdot \bar{d}_{max} + Q_{max}\epsilon(k,N,\delta)\right)}{1-\gamma},  \\ \epsilon(k,N,\delta) &= \sqrt{\frac{1}{2k}\ln{\frac{2 M_{N,k} }{\delta}}},
\end{align*}
which for L2 distance over a $d$-dimensional space yields  $\epsilon(k,N,\delta) = O\left(\sqrt{\frac{1}{k}\left(d\ln{kN}+\ln{\frac{1}{\delta}}\right)}\right).$

\begin{proof}
The proof first will bound the Bellman error of $\tilde{Q}$ from above an below and then apply Lemma \ref{lemma:bellman-error}. We first decompose the Bellman error into two parts by adding and subtracting $\hat{B}[\tilde{Q}]$.
\begin{align*}
    \tilde{Q}(s,a)-B[\tilde{Q}](s,a) = \underbrace{\left(\tilde{Q}(s,a) - \hat{B}[\tilde{Q}](s,a)\right)}_{\xi_d(s,a)} + \underbrace{\left(\hat{B}[\tilde{Q}](s,a) - B[\tilde{Q}](s,a)\right)}_{\xi_{sim}(s,a)}
\end{align*}
The first term corresponds to the error due to the non-zero distance of the $k$ nearest neighbors and the second term is due to sampling error. 

Noting that $\tilde{B}$ and $\hat{B}$ only differ in one term, $\xi_d(s,a)$ can be simplified as follows.
\begin{align*}
    \xi_d(s,a) &= \tilde{Q}(s,a) - \hat{B}[\tilde{Q}](s,a)  \\
    & = \tilde{B}[\tilde{Q}](s,a) - \hat{B}[\tilde{Q}](s,a) \\
    & = \frac{1}{k}\sum_{i\in kNN(s,a)} \Delta_i(s,a) - C\cdot d(s,a,s_i,a_i)
\end{align*}
From this and the fact tht $\left|\Delta_i(s,a)\right|\leq L(k,C)\cdot d(s,a,s_i,a_i)$ we can immediately derive upper and lower bounds on $\xi_d(s,a)$ for all $s$ and $a$.
$$-\left(L(k,C) + C \right)\cdot \bar{d}_{max} \leq \xi_d(s,a) \leq \left(L(k,C) - C\right)\cdot \bar{d}_{max}$$

We can bound $\xi_{sim}(s,a)$ by a direct application of Lemma \ref{lemma:sampling}. 
Specifically, with probability at least $1-\delta$, for all $s$ and $a$, $\left|\xi_{sim}(s,a)\right|\leq \epsilon(k,N,\delta)$. 
Putting these together we get that with probability at least $1-\delta$, for all $s$ and $a$ 
\begin{align}
    -\left(\left(L(k,C) + C \right)\cdot \bar{d}_{max} + \epsilon(k,N,\delta)\right) \leq \tilde{Q}(s,a) - B[\tilde{Q}](s,a) \leq \left(L(k,C) - C\right)\cdot \bar{d}_{max} + \epsilon(k,N,\delta). \nonumber
\end{align}
The proof is completed by applying Lemma \ref{lemma:bellman-error} to this bound.

Note: For a Euclidean distance metric, \cite{toth2017handbook}[Chapter 27] has established an upper bound on $M_{N,k}$.  $$M_{N,k} = O\left(N^{\left \lceil d/2\right \rceil}k^{\left \lfloor d/2\right \rfloor + 1}\right) = O\left((kN)^{d/2+1}\right).$$

\end{proof}

\subsection{Ablation Study}
\label{sec:AblationStudy}
 We highlight the deviation of the practical implementation of DAC-MDPs from theory in section \ref{sec:Apppendix_practical_issues}, namely, weighted averaging based on KNN distances ($WA$) and, KNN over states instead of state-action pairs ($sKNN$). The theory for DAC-MDP is derived for uniform averaging. i.e., the probability distribution to the candidate next states from KNN state action pairs is uniformly distributed irrespective of their relative distances. In contrast, weighted averaging normalizes the probability distribution according to their relative distances from the query state-action pair. Secondly, the theory states that we query the K nearest neighbors for each state-action pair to calculate the q values for any unseen state. This entails that $|A|$ numbers of KNN calls have to be made for each action decision. However, we can reduce this by simply querying k nearest neighbors for states over state-action pairs. We query for K- nearest states when $sKNN$ option is turned on. We conduct the ablation study on the full[100k] dataset as well as a smaller dataset comprising of only 10\% of the full dataset. Below in Figure~\ref{fig:CartPoleAblation} we show ablation study for each of these choices in the standard CartPole domain.

\begin{figure}[ht] 
\centering
\includegraphics[width=\textwidth,trim={15cm 10cm 15cm 10cm},clip]{CartPoleAblationStudy_K5_KP11.pdf}
\caption{(a) Ablation study for WA and sKNN in CartPole Domain. Greedy and eps-greedy policy returns for different sets of hyperparameters and dataset versions of size 100k. (left) and 10K (right) Hyperparameters: $[k=5, k_\pi = 11, C= 1, N_e = 1]$}
\label{fig:CartPoleAblation}
\end{figure}
 
 We find that for a larger dataset, neither weighted averaging nor the state KNN approximation affects the performance of DAC-MDPs. However, there is a noticeable drop in performance for $optimalBag$ dataset for smaller dataset sizes when state KNN approximation is turned off. This suggests that when the dataset is mostly comprised of subsamples of optimal trajectories, the data is not divided uniformly over the actions, resulting in less accurate estimates, especially when the data is limited and skewed towards a particular policy. 
 
\subsection{Additional CartPole Experiments}
 \label{sec:addtionalCartPole}
To further investigate the impact of DAC-MDP parameters, we conduct similar experiments to section \ref{sec:exploration} for different dataset sizes on CartPole. In addition to the greedy policy performance we also track the performance of $\epsilon$-greedy run of the policy to further distinguish the quality/robustness of the policies learned.

  \begin{figure}[h] 
\centering
\includegraphics[width=\textwidth,trim={11cm 1cm 16cm 1.5cm},clip]{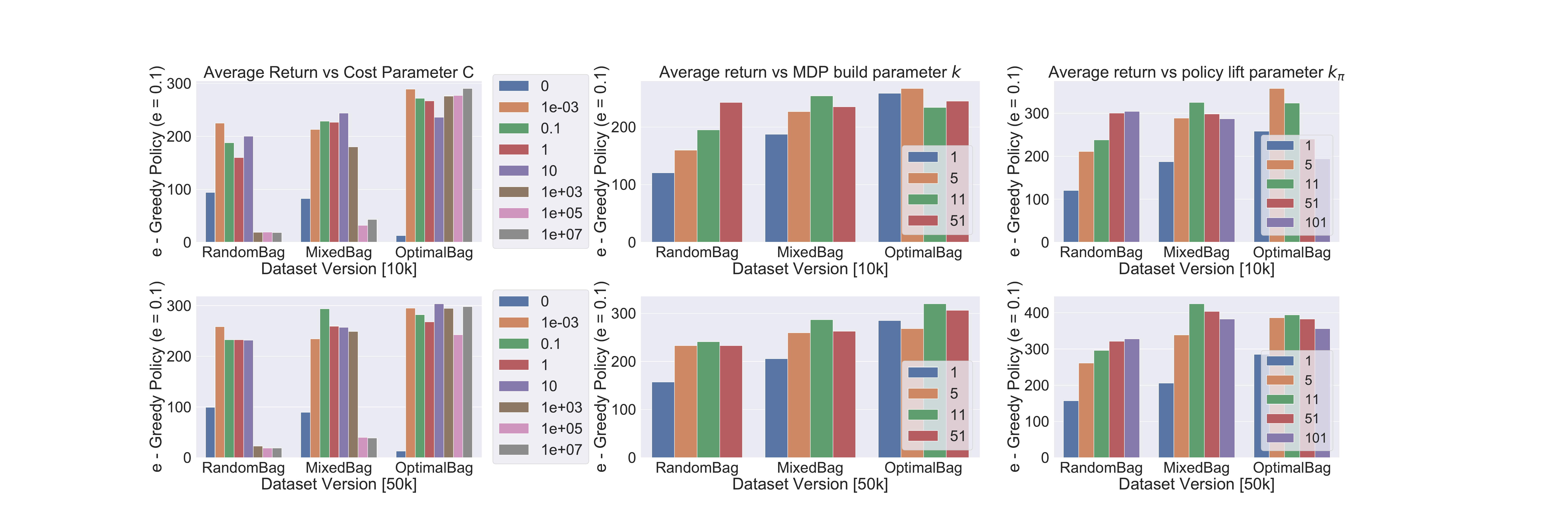}
\caption{ Eps-Greedy performance for CartPole on different dataset sizes. (a) Top row: dataset size 10k (b) Bottom Row: dataset size 50k }
\label{fig:cartpole_tunable_params_eps_greedy[10_50k]}
\end{figure}
 
 \begin{figure}[h] 
\centering
\includegraphics[width=\textwidth,trim={11cm 1cm 16cm 1.5cm},clip]{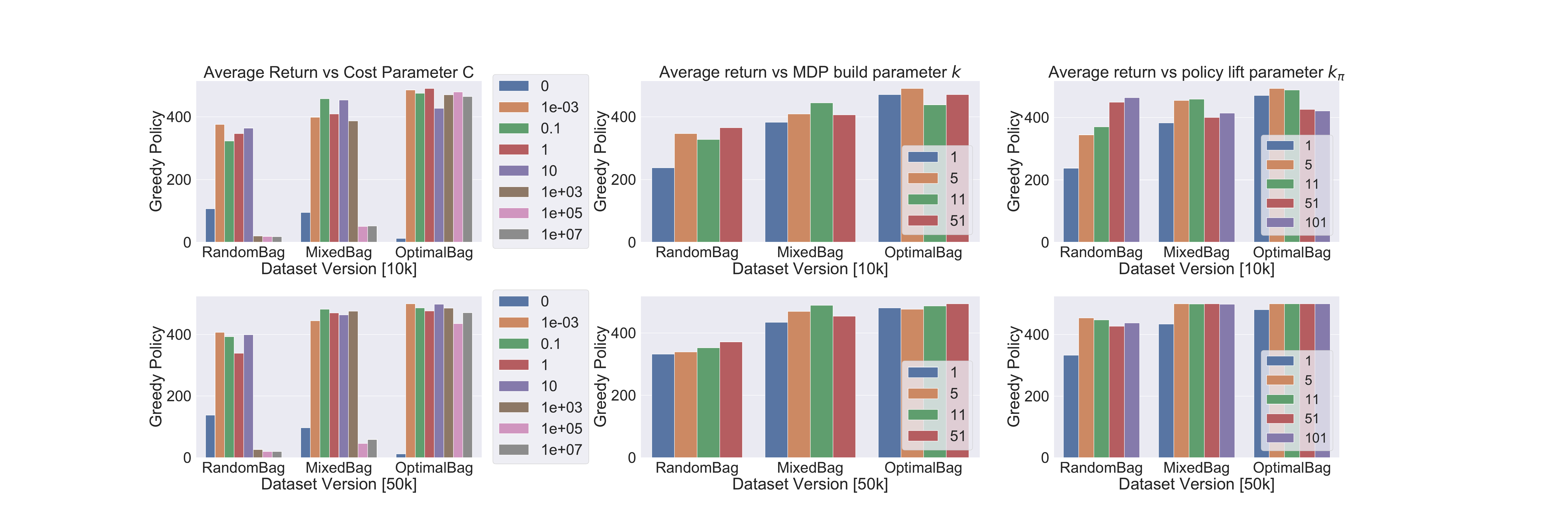}
\caption{ Greedy performance for CartPole on different dataset sizes. (a) Top row: datset size 10k (b) Bottom Row: dataset size 50k }
\label{fig:cartpole_tunable_params[10_50k]}
\end{figure}

 We see a very similar trend (as in 100k) for the choice of cost parameter $C$ even for the low data regime of 10k. Figure \ref{fig:cartpole_tunable_params_eps_greedy[10_50k]} shows that for all datasets, when there is no penalty for under-represented transitions,i.e. $C= 0$, the policy is extremely poor. At the other extreme, when $C$ is very large, the optimal policy tries to stay as close as possible to transitions in the dataset. This results in good performance for the $OptimalBag$ dataset, since all actions are near optimal.  However, the policies resulting from the $MixedBag$ and $RandomBag$ datasets fail. This is due to those datasets containing a large number of sub-optimal actions, which the policy should actually avoid for purposes of maximizing reward. Between these extremes, the performance is relatively insensitive to the choice of $C$.

 DAC-MDP, however, is more sensitive towards the choice of k for small dataset regime.
 Figure \ref{fig:cartpole_tunable_params_eps_greedy[10_50k]} second column, explores the impact of varying k using fixed $k_\pi= 1$ and $C= 1.$  The main observation is that there is a slight disadvantage to using $k= 1$, which defines a deterministic finite MDP, compared to $k >1$, especially for the non-optimal datasets. Moreover, there is also a slight disadvantage in using larger k when the dataset is not dense. It is to be noted that although $RandomBag$ dataset contains the same amount of experience, it is much denser than other datasets as random trajectories are quite short compared to optimal trajectories. This indicates that the optimal planner is able to benefit by reasoning about the stochastic transitions of the DAC-MDP for $k >1$. However, DAC-MDP suffers from high values of k when dataset is sparse.
 
Finally Figure~\ref{fig:cartpole_tunable_params_eps_greedy[10_50k]} third column, varies the policy smoothing parameter $k_\pi$ from 1 to 101. Similar to the results for varying smoothness parameter k, there is a benefit to using a value of $k_\pi>1$, but should not be chosen to be too large depending on how large the dataset is.

\begin{figure}[h] 
\centering
    \includegraphics[width=\textwidth,trim={9cm 4cm 10cm 4cm},clip]{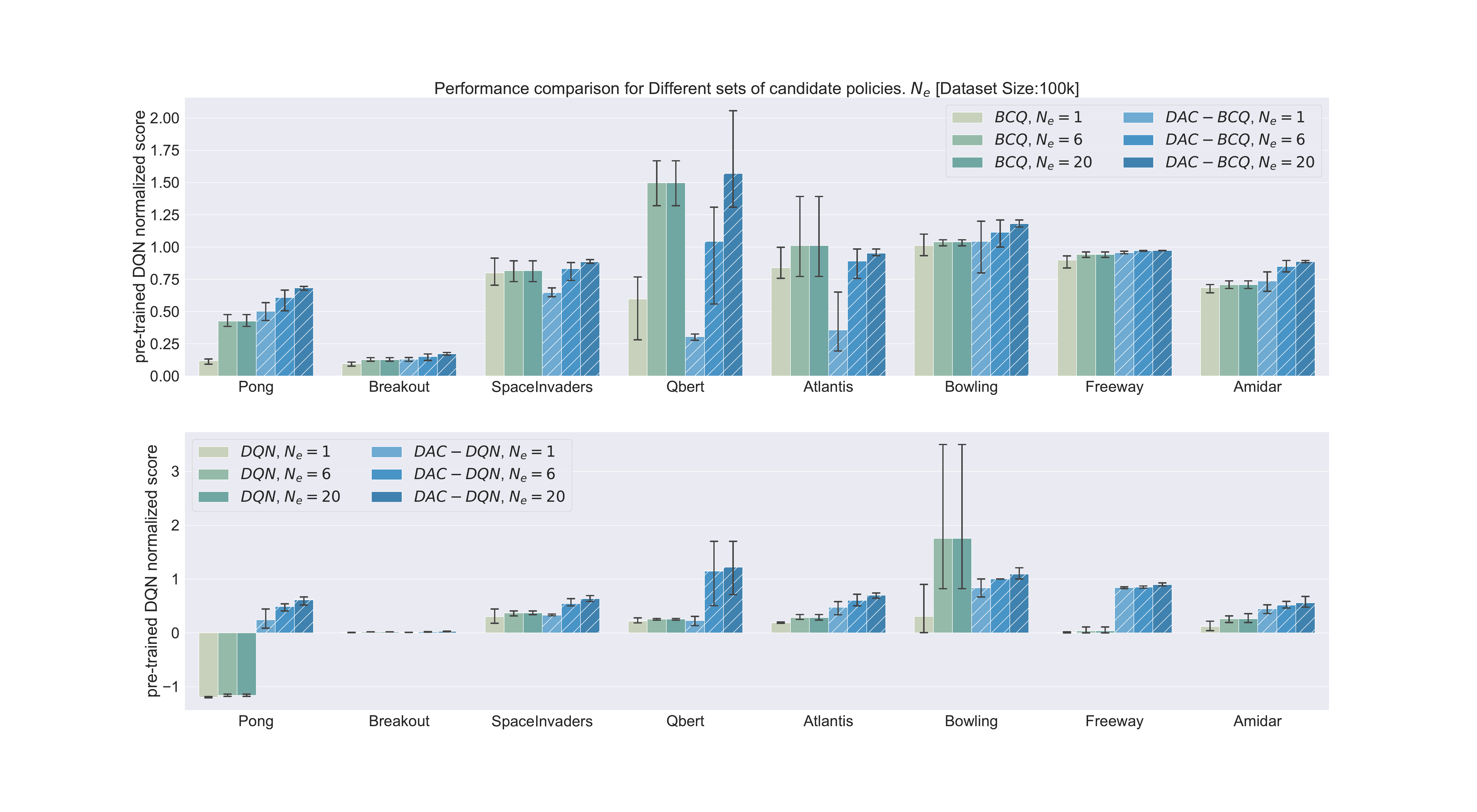}
      \caption{(a) Greedy performance for Atari using different learnt representations and evaluation candidate policies $N_e$[100k dataset]. Runs averaged over 3 runs. Error bars show the 95\% confidence interval. }
\label{fig:secondarAtariResult2}
\end{figure}

\begin{figure}[h] 
\centering
    \includegraphics[width=\textwidth,trim={9cm 4cm 10cm 4cm},clip]{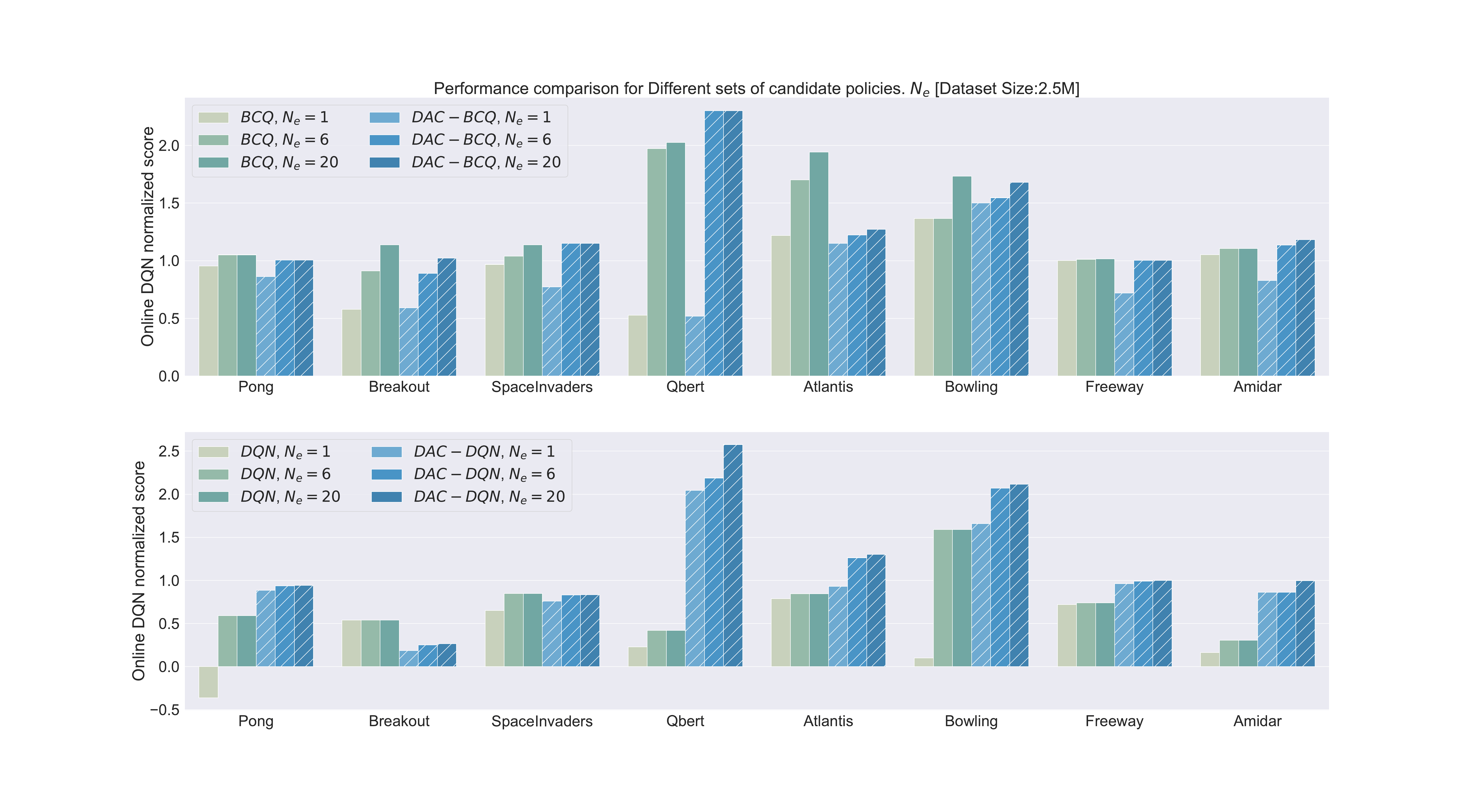}
      \caption{(a) Greedy performance for Atari using different learnt representations and evaluation candidate policies $N_e$ [2.5M dataset] }
\label{fig:secondarAtariResult3}
\end{figure}

\subsection{Additional Atari Experiments}
\textbf{Policy search for DQN and BCQ}: In our experiments we consider an evaluation protocol that makes the amount of online access to the environment explicit. In particular, the offline RL algorithm is allowed to use the environment to evaluate $N_e$ policies (e.g., each evaluation can be an average over repeated trials), which, for example, may derived from different hyperparameter choices. It is not clear how many of these evaluations will be actually needed for Q-learning algorithms such as DQN that is primarily designed for online learning. Even approaches focusing on offline learning are not spared from hyperparameter search and stopping criteria. Hence it is not clear how to evaluate Q-iterating policies such as DQN or BCQ for different values of $N_e$ where we are already using the best parameters reported from previous works.

However, we can still assume that we know the value of $N_e$ before hand and tune the learning process accordingly. More specifically given a fixed number of training iterations the evaluation frequency can be set such that we complete $N_e$ evaluations by the end of the training. We can then use the last $N_e$ evaluation checkpoints to obtain the set of candidate policies. We can then choose the best among them. It is worth noting that, even if we disregard the hyperparameter tuning, it is still not entirely fair to compare BCQ,$N_e=6$ directly with DAC-BCQ, $N_e$=6 as the DAC-MDP only has access to the very last representation. Moreover, DAC-MDPs do not need stopping criteria and are more robust to small representational changes. We show the policy search results for both DQN and BCQ for 100k dataset as well as 2.5M dataset in Figure~\ref{fig:secondarAtariResult2} and Figure~\ref{fig:secondarAtariResult3} respectively.

\begin{figure}[h]
  \begin{subfigure}[b]{0.46\textwidth}
    \includegraphics[width=\textwidth,trim={10cm 1cm 10cm 1cm},clip]{"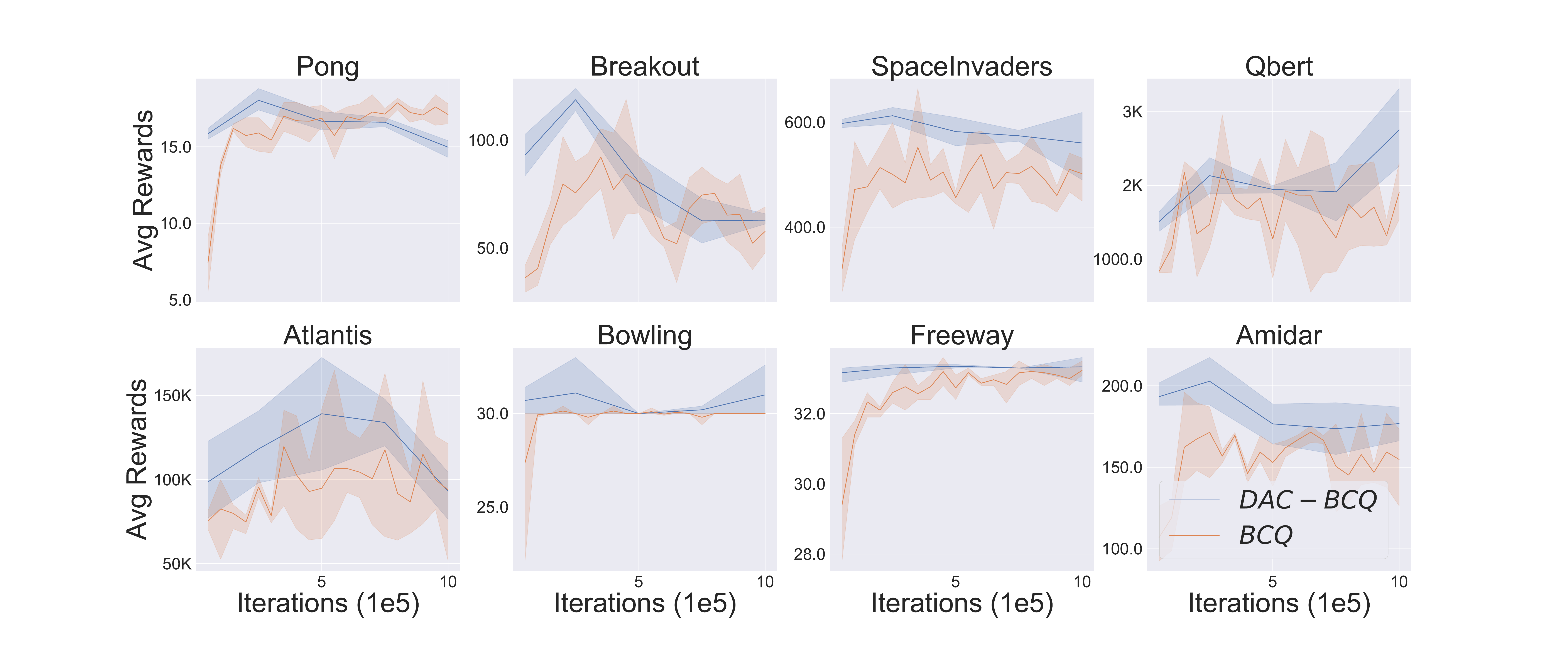"}
    \caption{}
    \label{fig:Atari[1M]lineplot}
  \end{subfigure}
  \begin{subfigure}[b]{0.535\textwidth}
    \includegraphics[width=\textwidth,trim={10cm 1cm 10cm 1cm},clip]{"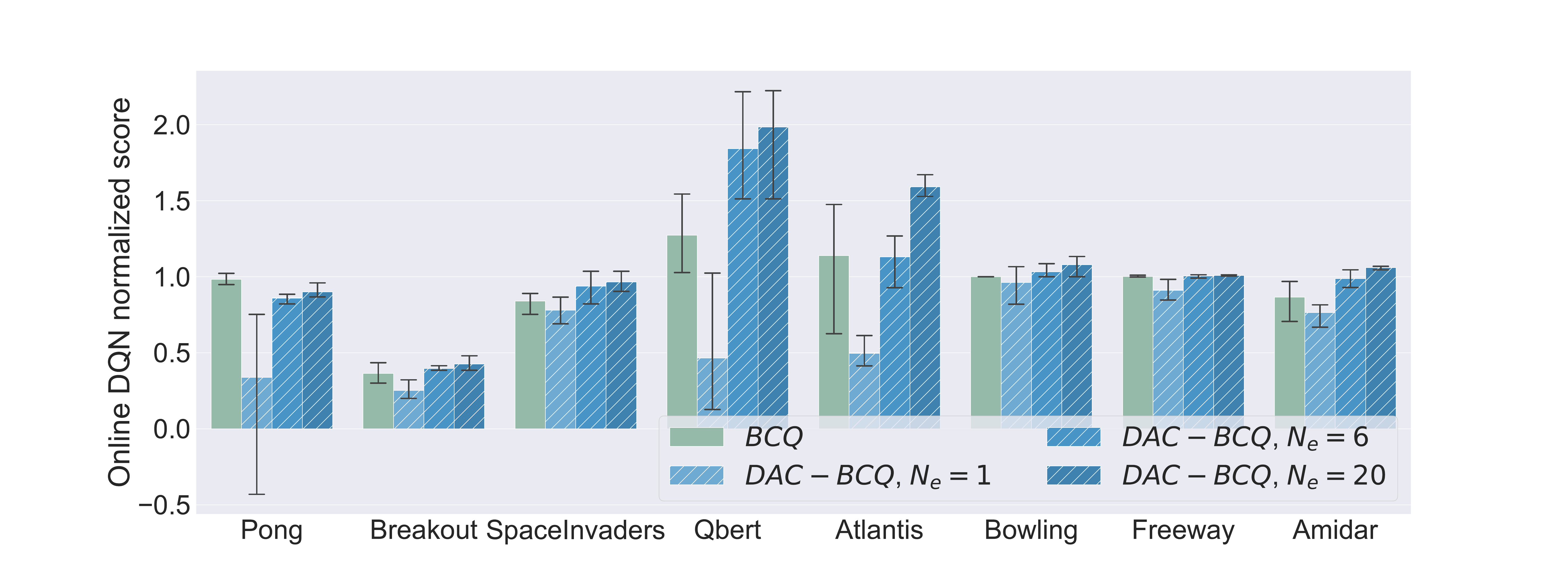"}
    \caption{}
    \label{fig:Atari[1M]barplot}
  \end{subfigure}
  
    \caption{ Results on Medium dataset size of 1M for (a) BCQ representation and BCQ agent is trained for 1m timesteps, and evaluated on 10 episodes every 50k steps. At each of 6 uniformly distributed evaluation checkpoints, we use the internal representation to compile DAC-MDPs. We then evaluate the DAC-MDPs for $N_e=6$. (b) Final BCQ policy along with the corresponding performance of DAC-BCQ for different values of $N_e$. Runs averaged over 3 runs, error bars so the 95\% confidence interval. }
\end{figure}

Additionally we also run the experiment on dataset size 1M at different evaluation checkpoints as done in the main paper. We trained offline $BCQ$ agent on the dataset for 1M iterations.  This allows us to compare to these baselines as well as use the $BCQ$ latent state representations for DAC-MDP construction.  The performance of the corresponding $BCQ$ policies is evaluated using 10 test episodes every 50K iterations.  At some select position of $BCQ$ evaluation iteration, we constructed DAC-MDPs using the latent state representation at that point.  In particular, this first experiment considers the offline RL setting where $N_e= 6$, meaning that we can evaluate 6 policies(using 10 episodes each) corresponding to 6 different DAC-MDP parameter settings comprised of the combinations of $k= 5$,$C\in{1,100,1M}$, and$k_\pi \in{11,51}$. For each representation the best performing DAC-MDP setting at each point is then recorded. The entire 1M iteration protocol was repeated 3 times and Figure~\ref{fig:Atari[1M]lineplot} show the averaged curves with 90\%confidence intervals. Figure~\ref{fig:Atari[1M]barplot} investigates the performance of final iteration for different values of $N_e$. All hyperparameters and normalization were selected as in the 100K experiment. We see that in the DAC-MDPs can perform as good as BCQ or better in most of the evaluation checkpoints even for the larger dataset size and training iterations.

\subsection{Additional 3D Navigation Experiments}
\label{sec:AddtionalNavExps}

We also compare DAC-MDPs performance against DQN and BCQ baselines for first person 3D navigation Domain. In all scenarios the agent has three actions:  Move Forward,Turn Right, and Turn Left and a maximum episode time steps of 100.  The raw observations are 84x84 RGB images. We use the same settings for DAC-MDP as described in the main paper. We use the same baselines DQN and BCQ from Atari experiments with different set of hyperparameters as detailed in Appendix \ref{sec:appendix_hyperparams}

\begin{figure}[ht]
  \begin{subfigure}[b]{0.33\textwidth}
    \includegraphics[width=\textwidth,trim={0cm 0cm 0cm 0cm},clip]{"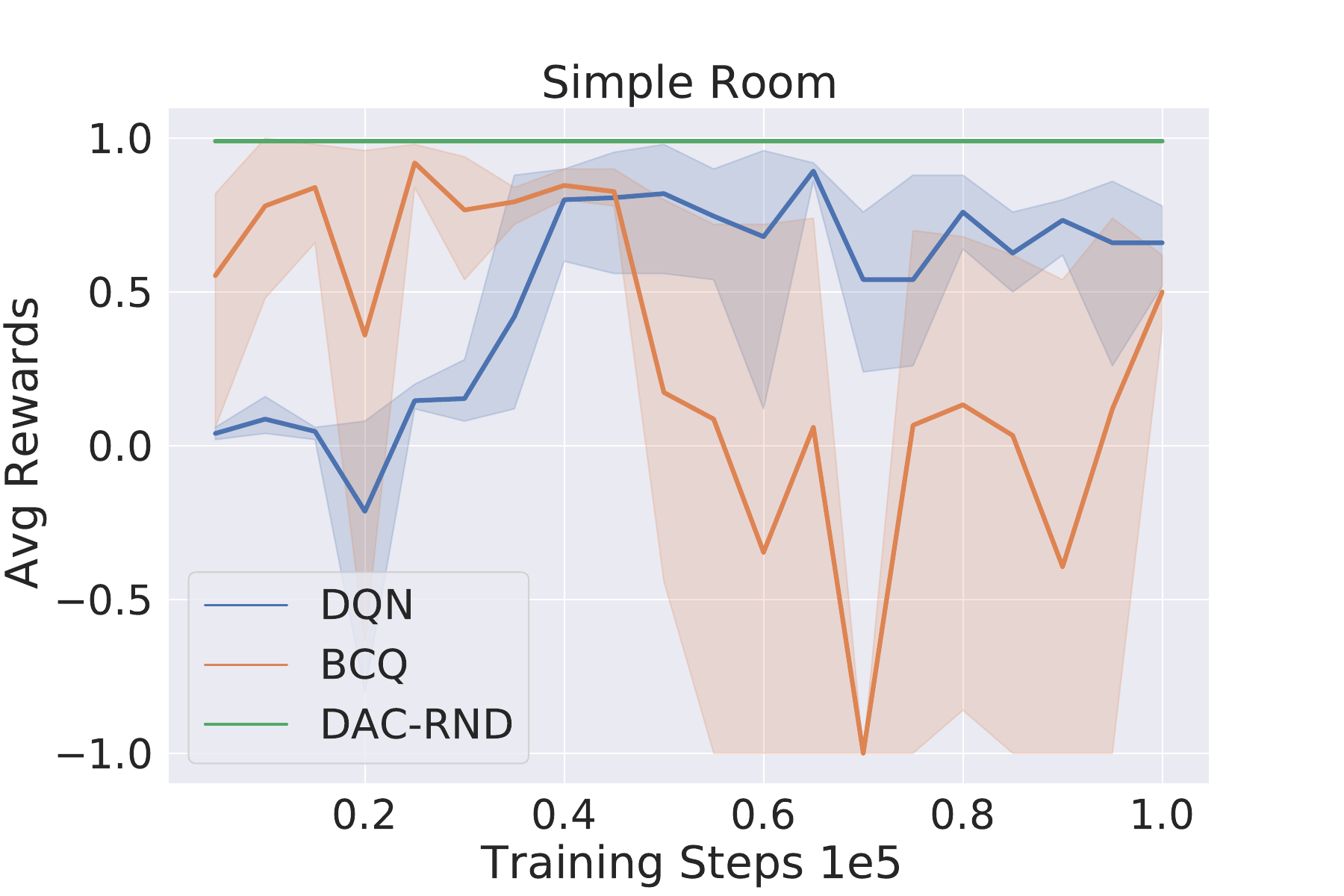"}
    \caption{}
    \label{fig:SimpleRoomBaselines}
  \end{subfigure}
    \begin{subfigure}[b]{0.33\textwidth}
    \includegraphics[width=\textwidth,trim={0cm 0cm 0cm 0cm},clip]{"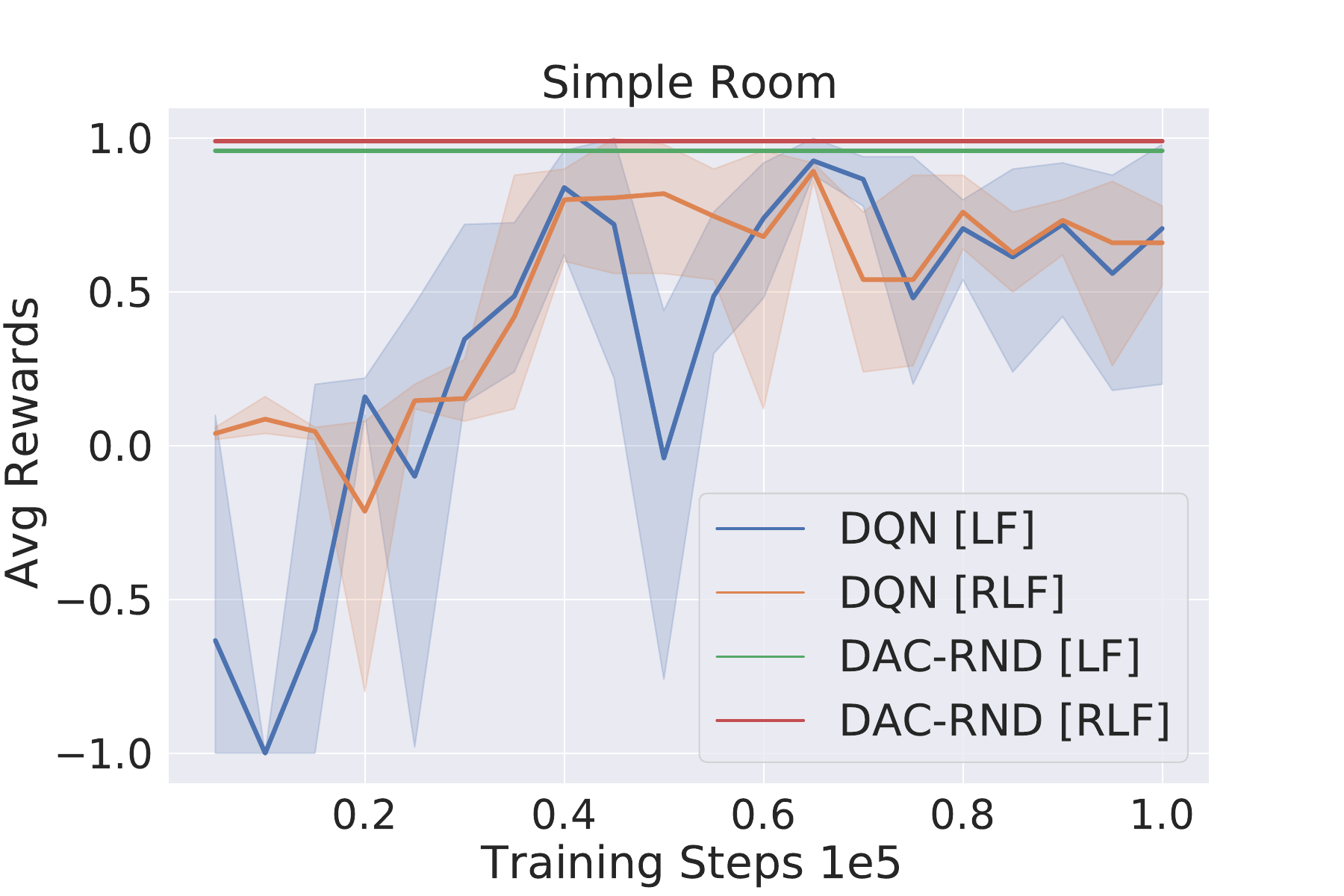"}
    \caption{}
    \label{fig:SimpleRoomBaselinesNoLeft}
  \end{subfigure}
      \begin{subfigure}[b]{0.33\textwidth}
    \includegraphics[width=\textwidth,trim={0cm 0cm 0cm 0cm},clip]{"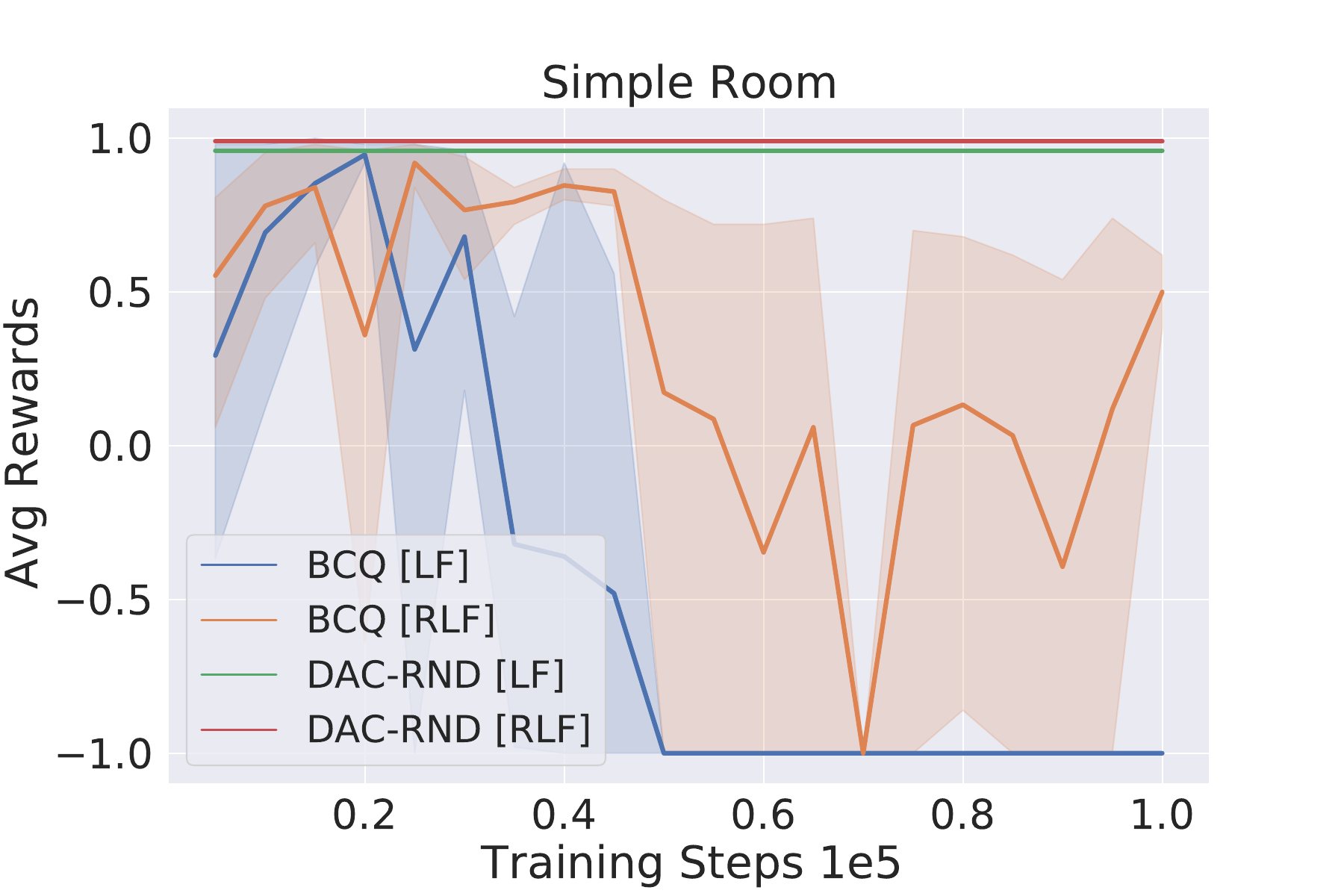"}
    \caption{}
    \label{fig:SimpleRoomBaselinesNoRight}
  \end{subfigure}
  \begin{subfigure}[b]{0.33\textwidth}
    \includegraphics[width=\textwidth,trim={0cm 0cm 0cm 0cm},clip]{"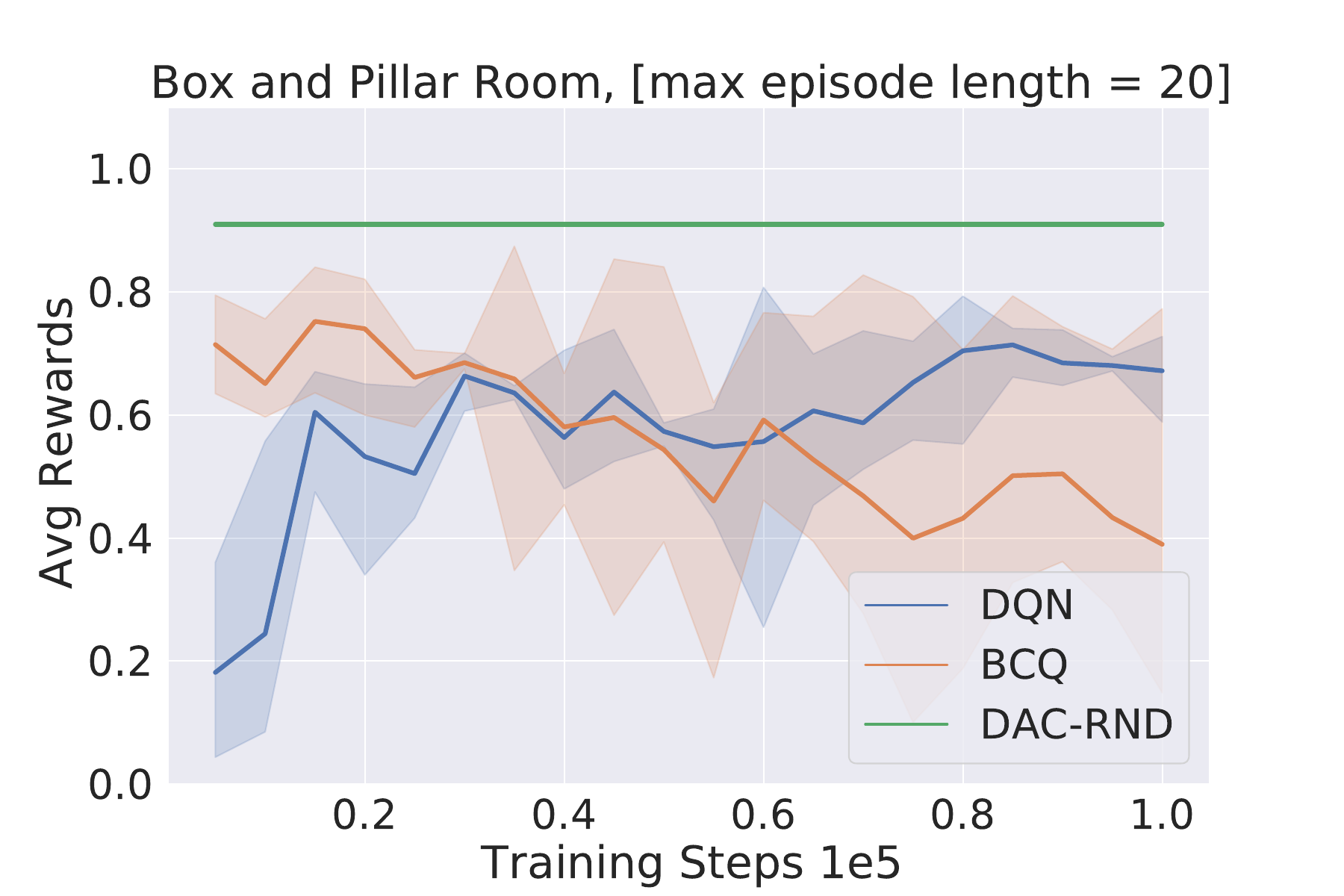"}
    \caption{}
    \label{fig:BoxAndPillarBaselinesShort}
  \end{subfigure}
  \begin{subfigure}[b]{0.33\textwidth}
    \includegraphics[width=\textwidth,trim={0cm 0cm 0cm 0cm},clip]{"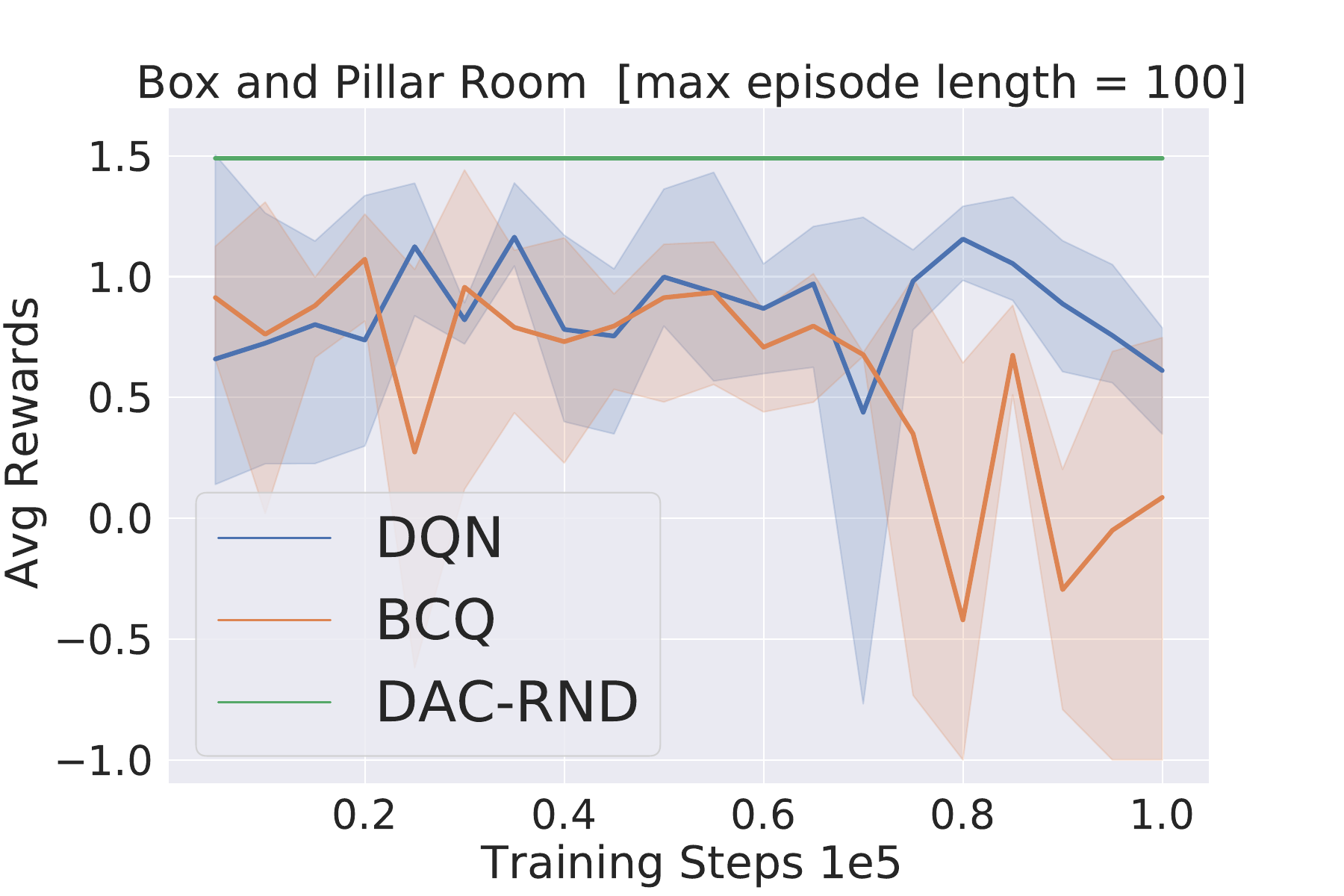"}
    \caption{}
    \label{fig:BoxAndPillarBaselinesLong}
  \end{subfigure}
  \begin{subfigure}[b]{0.33\textwidth}
   \includegraphics[width=\textwidth,trim={0cm 0cm 0cm 0cm},clip]{"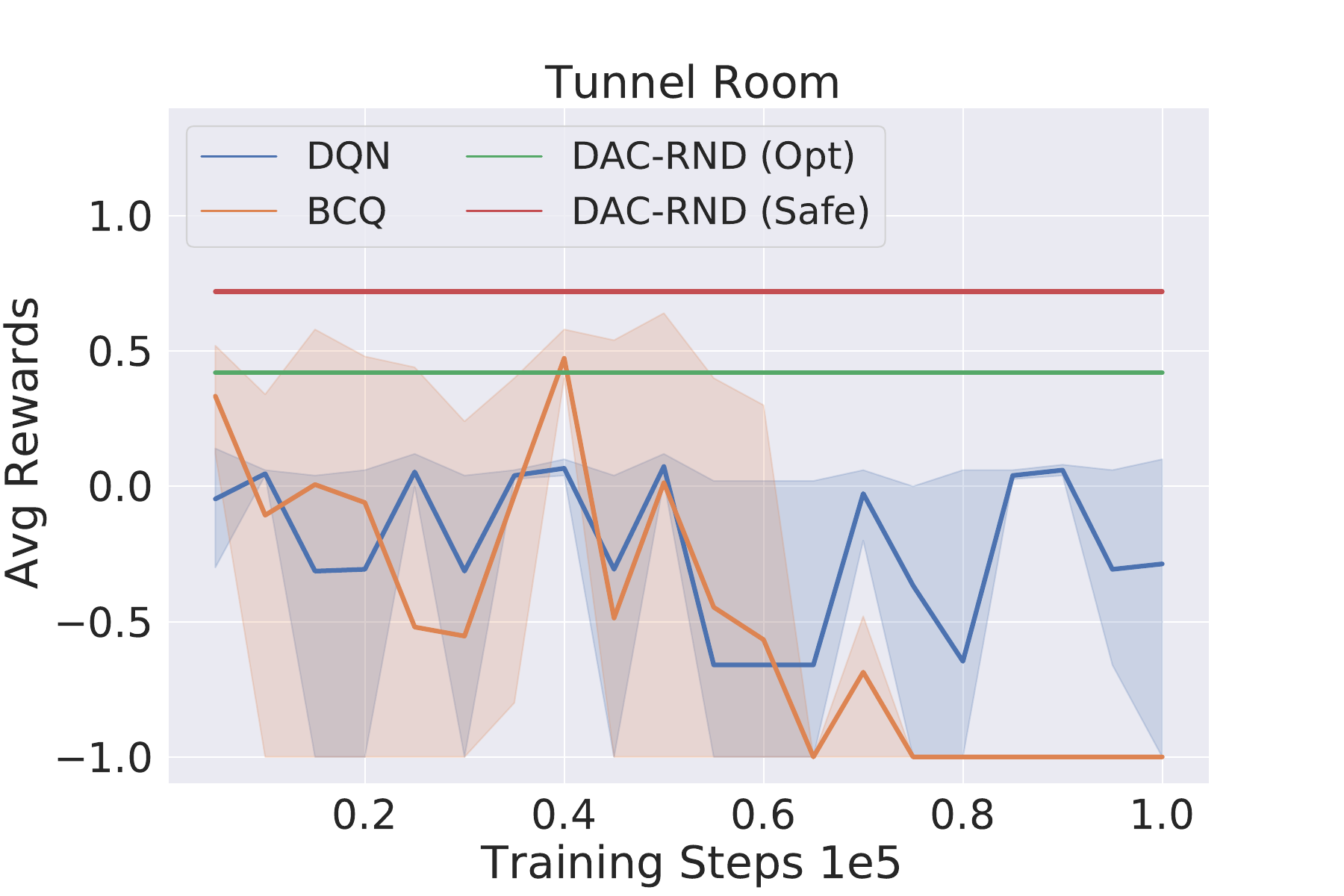"}
    \caption{}
    \label{fig:TunnelRoomBaselines}
  \end{subfigure}
    \caption{ Comparison of DAC-MDPs using randomly initialized representation networks (DAC-RND) with baselines DQN and BCQ for 3D navigation Domains. (a) Simple Room Domain, (b) Simple Room Domain: DQN vs DAC-RND on right actuator failure[LF] (c) BCQ vs DAC-RND on right actuator failure[LF](d) Box and Pillar Room: short horizon, maximum episode length of 20, (e) Box and Pillar Room: Long horizon, maximum episode lengths of 100, (f) Tunnel Room. Rewards across the rooms are clipped at (-1,2) for normalized results.}
\end{figure}

\emph{Case 1: Adapting to Modified Action Space.} Here an agent is spawned in a random initial position with a goal(blue box) in a fixed position. We call this Simple Room $(V^{\pi_\beta} = -9.36)$\footnote{ $V^{\pi_\beta}$ is the expected performance of the behavioral policy for the dataset. }. The agent gets a -1 reward for bumping into the walls and a terminal reward of +1 for reaching the goal. We run DQN and BCQ on the same dataset in the simple room. figure this shows that DAC-MDP performs better. For the simulated event of damaged actuator, we simply discard the damaged action from the policy action space. While this allows for a zero-shot transfer learning in this scenario, the methods suffer with higher variance for DQN and fails to perform well in case of BCQ as pointed in figure this and this.  The DAC-MDP however can adapt to these changes very robustly. 

\emph{Case 2:  Varying Horizons.}
We create a new room by adding a pillar in the simple room described above. Additionally, the agent gets a small non-terminal reward of 0.02 for bumping into the pillar. We call this BoxAndPillar Room $(V^{\pi_\beta} = -9.84)$. Depending on the placement of the agent and the expected lifetime of the agent it may be better to go for the single +1 reward or to repeatedly get the 0.02 reward. This expected lifetime can be simulated via the discount factor, which will result in different agent behavior. As the baselines do not allow for zero shot transfer in this setting, we retrain the DQN and BCQ agents for different discount factors: 0.95 for short horizon and 0.99 for long horizon task. We then evaluate these policies on BoxAndPillar room with different maximum episode steps. 20 and 100 for short and long horizon settings respectively. 

Here as pointed in figure this and this we see that the baselines cannot account for the long horizon planning effectively and mostly choose to plan for the shorter horizon even with a larger discount factor. hence it is clear that the discount factor do not translate to effective planning horizon in these baselines. However, the DAC-MDP approach using a small discount factor (0.95) our short-term agent achieves an average score of 0.91 learns to immediately get to the box along with the long-term agent repeatedly collecting rewards from the pillar. Moreover DAC-MDP simply allows for changing the discount factor used by VI and solving for the new policy in seconds for this secondary objective. 

\emph{Case 3:  Robust/Safe Policies.} The Tunnel room $(V^{\pi_\beta} = -25)$ adds a narrow passage to the box with cones on the side. This room simulates the classic dilemma of whether to choose a risky or safe path.  The agent can risk getting a -10 reward by bumping into the cones or can optionally follow a longer path around an obstacle to reach the goal.  Even with a relatively large dataset, small errors in the model can lead an agent along the risky path to occasionally hit a cone. We train a DQN and BCQ agent using the same dataset and see that they do not perform as well as DAC-DQN in the experiments. Moreover, it is not straight forward on how to add stochasticity in the baselines to achieve a more robust policy. In contrast, we can easily find an optimal solution to a modified DAC-MDP where at each step there is a 10\% chance of taking a random action. This safe policy avoids the riskier paths where a random “slip” might lead to disaster and always chooses the safer path around the obstacle, and also achieves an average score of 0.72.  

% \vspace{-4em}
\begin{figure}[ht]
  \begin{subfigure}[b]{0.24\textwidth}
    \includegraphics[width=\textwidth, height=\textwidth, trim={7cm 1cm 7cm 1cm},clip]{"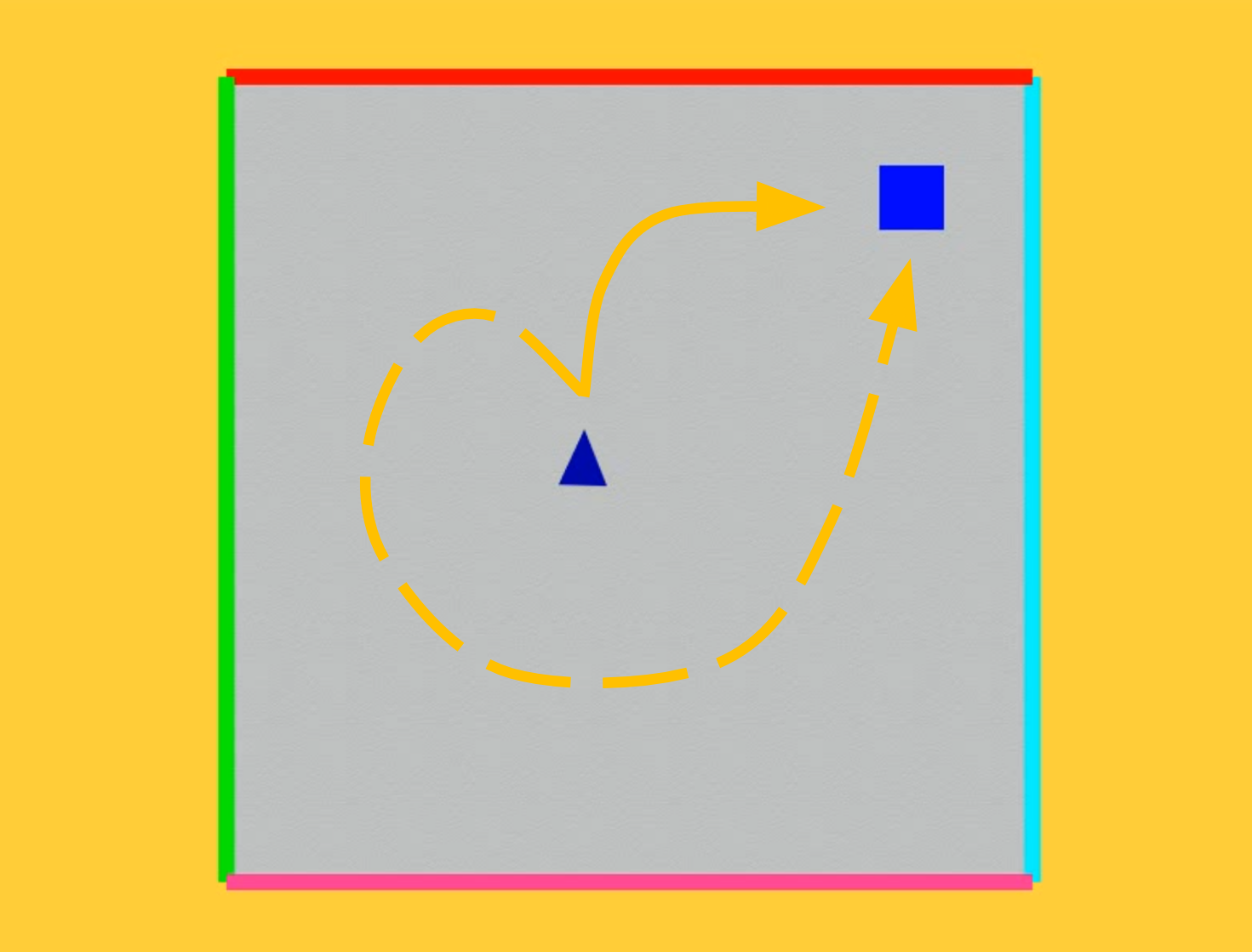"}
    \caption{}
    \label{fig:SimpleRoomNoRightviz}
  \end{subfigure}
  \begin{subfigure}[b]{0.24\textwidth}
    \includegraphics[width=\textwidth, height=\textwidth, trim={7cm 1cm 7cm 1cm},clip]{"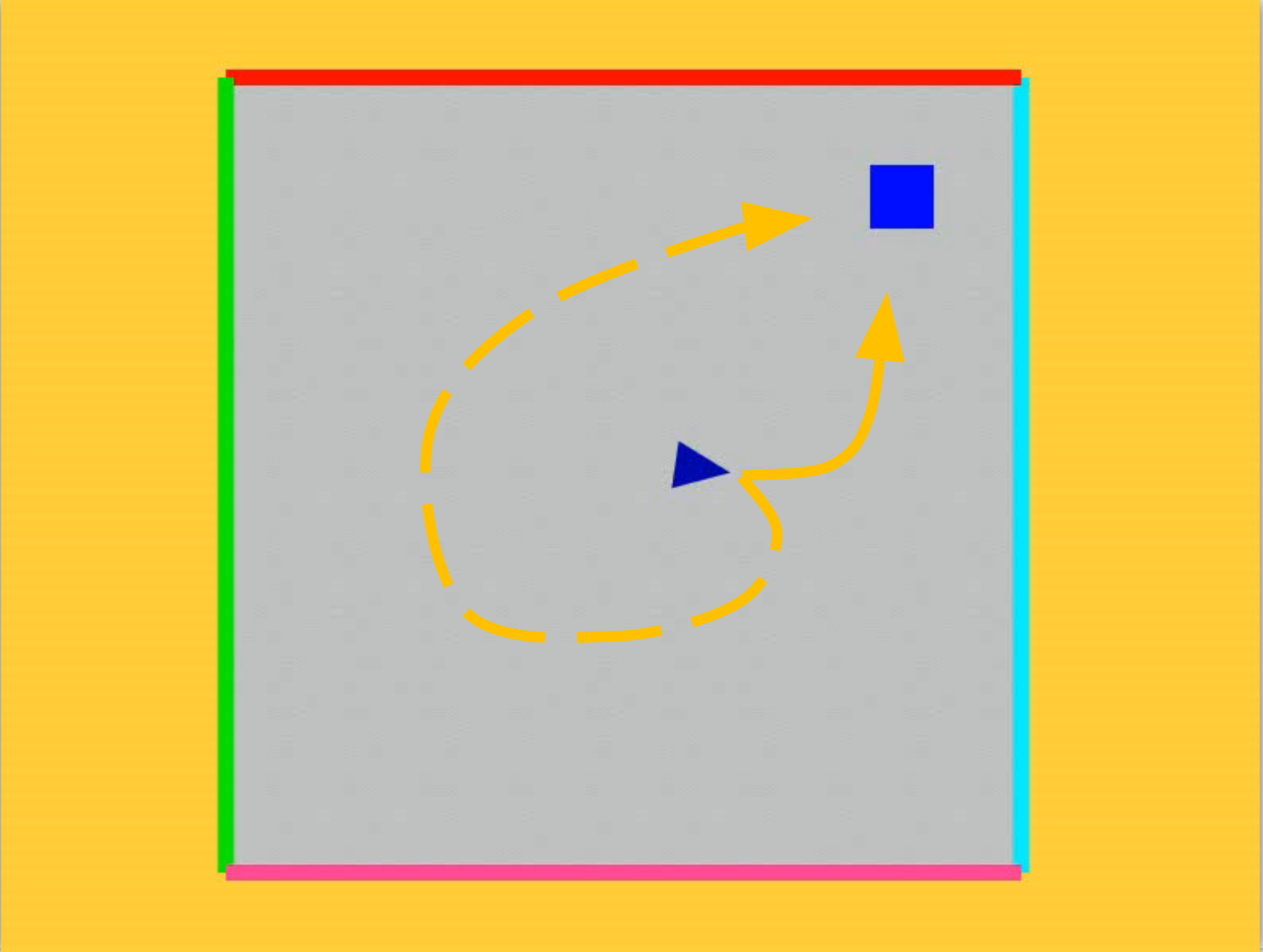"}
    \caption{}
    \label{fig:SimpleRoom}
  \end{subfigure}
    \begin{subfigure}[b]{0.24\textwidth}
    \includegraphics[width=\textwidth, height=\textwidth, trim={7cm 1cm 7cm 1cm},clip]{"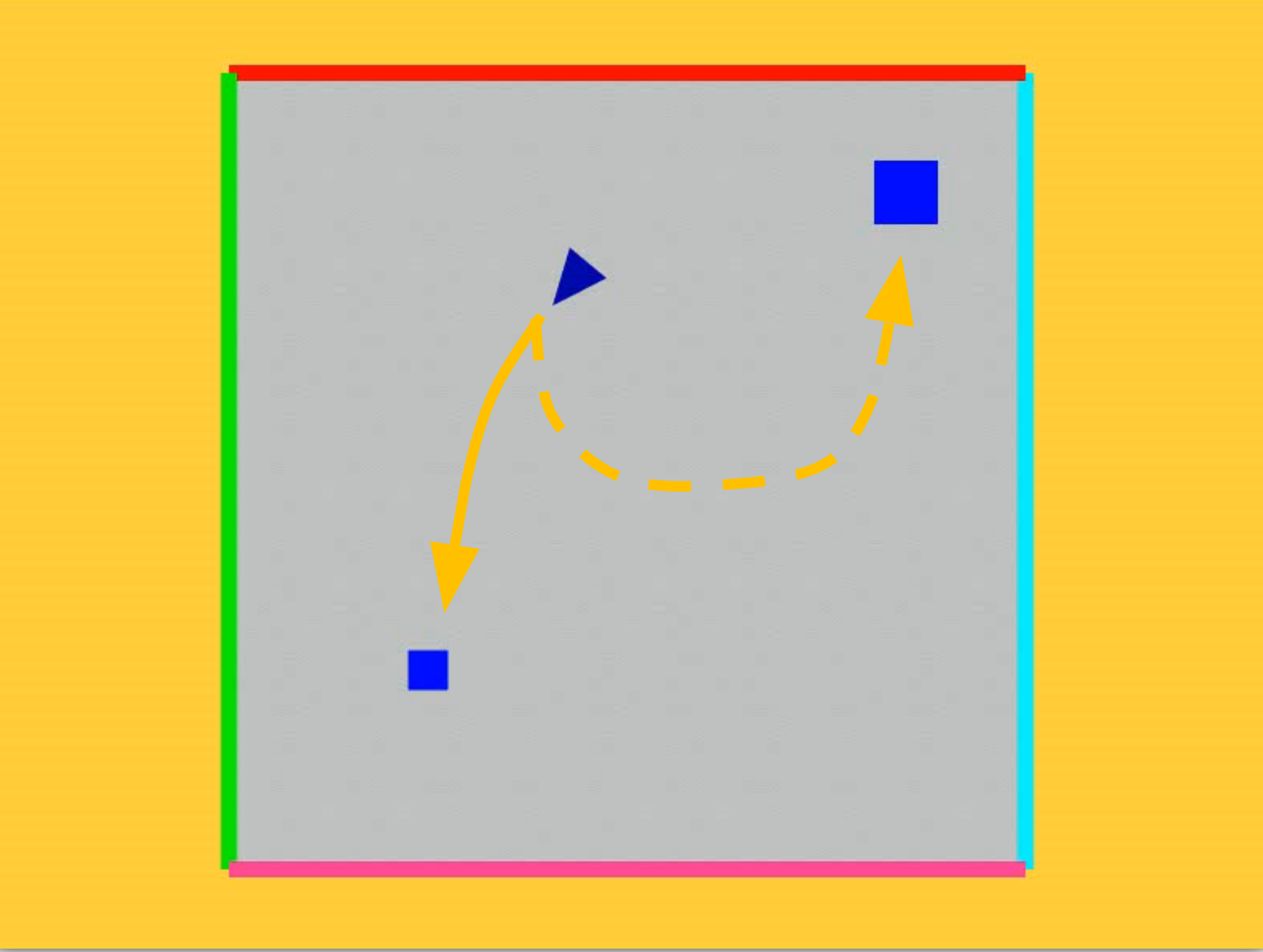"}
    \caption{}
    \label{fig:BoxAndPillarRoom}
  \end{subfigure}
    \begin{subfigure}[b]{0.24\textwidth}
    \includegraphics[width=\textwidth, height=\textwidth, trim={7cm 1cm 7cm 1cm},clip]{"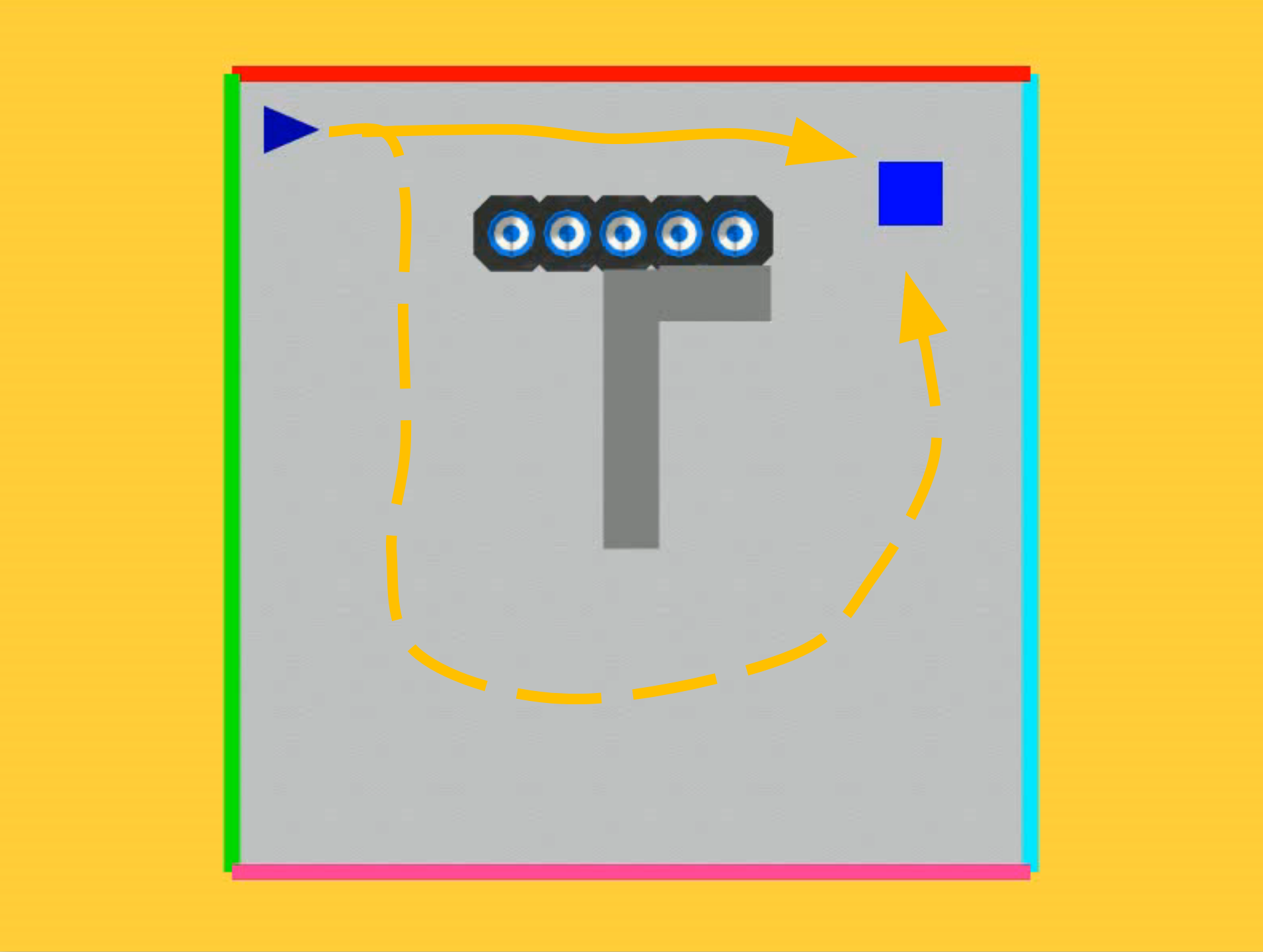"}
    \caption{}
    \label{fig:TunnelRoom}
  \end{subfigure}

    \caption{Visualization of DAC-MDP policies for 3D Navigation Domain (a) \textbf{Simple Room} \textit{Solid arrow}: policy rollout of standard DAC-MDP. \textit{Dotted arrow:} policy rollout of modified DAC-MDP ;Right Turn Penalized .  (b) \textbf{Simple Room} \textit{Solid arrow:} policy rollout of standard DAC-MDP. \textit{Dotted arrow:} policy rollout of modified DAC-MDP; Left Turn Penalized .  
    (c) \textbf{ Box and Pillar Room} \textit{Dotted Arrow:} policy rollout of DAC-MDP solved with small discount factor [short-term planning]. \textit{Solid Arrow:} Policy rollout of DAC-MDP solved with Large Discount Factor [long-term planning] (d) \textbf{Tunnel Room }\textit{ Dotted Arrow:} Policy rollout of standard DAC-MDP. No stochasticity Introduced. [optimal policy]. \textit{Solid Arrow:} Policy rollout of modified DAC-MDP with added stochasticity in dynamics. [safe policy]}
\end{figure} 

Hence we see that typical Deep RL methods like DQN and BCQ do not excel for secondary objectives. There are settings such as failed actuator where they can perform zero-shot transfer learning but fail to give any guarantees, moreover they can fail sometimes as shown empirically. In cases where zero-shot transfer is not possible such as different horizons, they can be retrained for these secondary objectives, but this comes at a huge cost (if and when they succeed) as compared to just solving for a new objective in DAC-MDP as shown in the computational analysis in the main paper. Furthermore, these baseline methods are not able to tackle objectives such as robustness in any trivial manner. In contrast DAC-MDP approaches have a fine-tuned control over the learned MDP and its planning parameters.

\subsection{Scaling Value Iteration With GPUs}
\label{sec:AppendixScalingVI}

Value iteration (VI) \citep{Kneale1958DynamicPB} successively approximates the value function, starting from an arbitrary initial estimate by turning the Bellman optimality equation into an update rule. While VI is simple and able to calculate the exact solution for any MDP, one of the major disadvantages is that they are computationally inefficient. A large number of bellman backups has to be computed for each state before convergence and the memory required grows exponentially with the number of states in the MDPs. Hence, it is only natural to optimize value iteration using parallel computing platforms such as GPUs. \cite{Jhannsson2009GPUbasedMD} was one of the first to introduce a GPU based MDP solver. In particular, they introduce two algorithms: Block Divided Iteration and Result Divided Iteration. However, there have been works leveraging GPU optimized value iteration for specific domains. (eg. \cite{7429422}, \cite{Wu2016GPUAcceleratedVI}), there does not exist a standard public library for GPU optimized value iteration. To facilitate this, we implement a version that is a hybrid between the Block Divided Iteration and Result Divided Iteration approaches using a simple CUDA kernel as described in pseudo-code \ref{psedo:GPU_vi_kernel}\footnote{ Note that the different versions of GPU implementation does not affect the convergence properties of Value Iteration as each approach is still carrying out exact value iteration.}.

\begin{algorithm}
\caption{GPU Value Iteration Kernel}\label{alg:gpu_kernel}
\begin{algorithmic}[1]
\Procedure{BellmanBackup}{$*T_P,*T_I, *R, *V, *V', *Q',*\delta$}
\State $i \gets get\_thread\_id()$
\State $v, v_{max} \gets 0,0$
\For{$j \in range(A)$}
    \For{$k \in range(k_b)$} \Comment{$k_b$ is initialized externally as per the MDP build parameter $k_b$}
        \State  $P_{ss'} \gets T_P[i,j,k]$ 
        \State  $I_{s'} \gets T_I[i,j,k]$
        \State  $v \gets v + P_{ss'}V[I_{s'}]$
    \EndFor
    \State $Q'[i,j] \gets v$
    \State $v_{max} \gets max(v,v_{max})$
\EndFor
\State $V'[i] \gets v_{max}$
\State $\delta[i] = abs(V[i] - V'[i])$
 \EndProcedure
\end{algorithmic}
\label{psedo:GPU_vi_kernel}
\end{algorithm}

\begin{algorithm}
\caption{GPU Value Iteration Function}\label{alg:gpu_api}
\begin{algorithmic}[1]
\Procedure{ValueIteration}{$tranDict, rewardDict,\delta_{min}$}
\State $T_p, T_I, R = get\_sparse\_representation(tranDict,rewardDict)$
\State $T_p, T_I, R = allocate\_gpu\_memory(T_p, T_I,R)$
\State $V, Q', \delta = allocate\_gpu\_memory\_for\_value\_vectors()$
\While{$min(\delta) > \delta_{min}$}
\State $V' = allocate\_gpu\_memory(V.size())$
\State $RunGPUKernel(T_p, T_I, R,V, V', Q', \delta)$
\State $V = V')$
\State $\delta[i] = abs(V[i] - V'[i])$
\State $release\_memory\_for(V')$
\EndWhile
\EndProcedure
\end{algorithmic}
\label{psedo:GPU_vi_function}
\end{algorithm}

For a compact representation of the Transition matrix, we use a list of lists as our sparse representations. Here the Transition Matrix is divided into two matrices, one that holds the index of next states with a non-zero transition probability ($T_I$) and the other, which holds the actual transition probability($T_P$). Each thread takes a single row from these two matrices and the Reward matrix to compute each state-action pair's Q values and the new value of a state. The value vector is shared among the threads and synchronized after each bellman backup operation. 

To benchmark the GPU implementation's performance, we compare its run-time with a standard serial implementation of Value iteration across MDPs of varying sizes. These MDPs are DAC-MDPs generated by different samples from a large pool of datasets with continuous state vectors. The serial implementation is run on an Intel Xeon processor and does not use any CPU multi-processing. We plot the relative performance gain across different GPUs with varying CUDA cores. We consider 3 GPUs, namely, GTX 1080ti, RTX 8000, and Tesla V100, each with a CUDA core count of 3584, 4608 and 6912, respectively. The GPU optimized implementation provides anywhere between 20-1000X boost in solving speed over its serial counterpart, as shown in Figure~\ref{fig:gpubenchmark}. Currently, our implementation of VI can solve MDPs with a million states less than 30 seconds.

\begin{figure}[h]
\centering
    \includegraphics[width=0.6\textwidth,trim={3cm 1cm 2cm 1cm},clip]{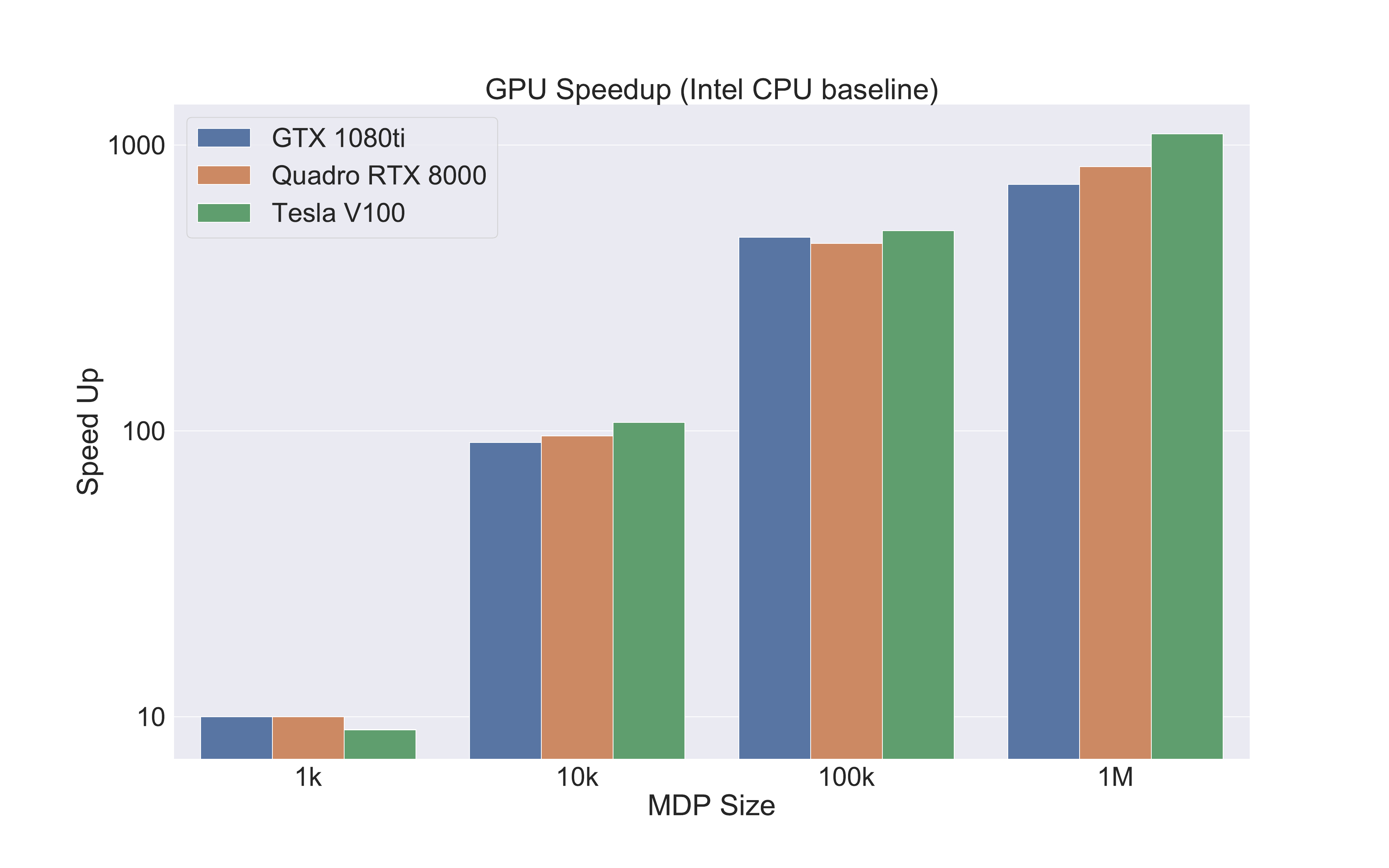}
      \caption{Compares the performance results for serial VI solvers with its GPU optimized implementation. The serial VI solver is benchmarked using Intel Xeon CPU. We plot the performance gain over different MDP sizes.}
\label{fig:gpubenchmark}
\vspace{-2em}
\end{figure}

\newpage

\csection{Experimental Details}
\csubsection{Atari Preprocessing}

The Atari 2600 environment is processed in teh same manner as previous work \citep{Mnih2015HumanlevelCT,Machado2018RevisitingTA} and we use consistent preprocesssing across all tasks and algorithms. 

\begin{center}
Table 1: Atari pre-processing details \\
\begin{tabular}{lr}
\hline Name & Value \\
\hline Sticky actions & Yes \\
Sticky action probability & 0.25 \\
Grey-scaling & True \\
Observation down-sampling & (84,84) \\
Frames stacked & 4 \\
Frameskip (Action repetitions) & 4 \\
Reward clipping & {[-1,1]} \\
Terminal condition & Game Over \\
Max frames per episode & $108 \mathrm{K}$ \\
\hline
\end{tabular}
\end{center}

\csubsection{Architecture and Hyper-parameters}
\label{sec:appendix_hyperparams}
Same architecture and hyperparameters were used as in \cite{Fujimoto2019BenchmarkingBD} with slight modifications in the architecture.

\begin{center}
Table 2: Architecture used by each Network \\
\begin{tabular}{lcl} 
\hline
Layer & Number of outputs & Other details \\
\hline Input frame size & (4x84x84) & $-$ \\
Downscale convolution 1 & 12800 & kernel 8x8, depth 32, stride 4x4 \\
Downscale convolution 2 & 5184 & kernel 4x4, depth 32, stride 2x2 \\
Downscale convolution 3 & 3136 & kernel 3x3, depth 32, stride 1x1 \\
Hidden Linear Layer 1 & 512 & - \\
Hidden Linear Layer 2 & 16 & - \\
Output Layer & $|A|$ & - \\
\hline
\end{tabular}
\end{center}

\begin{center}
Table 3: All Hyperparameters for DQN and BCQ [Atari]
\begin{tabular}{ll}
\hline Hyper-parameter & Value \\
\hline Network optimizer & Adam \cite{Kingma2015AdamAM} \\
Learning rate & 0.0000625 \\
Adam $\epsilon$ & 0.00015 \\
Discount $\gamma$ & 0.99 \\
Mini-batch size & 32 \\
Target network update frequency & $8 \mathrm{k}$ training updates \\
Evaluation $\epsilon$ & 0.001 \\
Threshold $\tau$ (BCQ) & 0.3 \\
\hline
\end{tabular}
\end{center}

\begin{center}
Table 4: All Hyperparameters for DQN and BCQ [3D Nav]
\begin{tabular}{ll}
\hline Hyper-parameter & Value \\
\hline Network optimizer & Adam \cite{Kingma2015AdamAM} \\
Learning rate & 0.0003 \\
Discount $\gamma$ & 0.99/0.95\\
Mini-batch size & 64 \\
Target network update frequency & $100$ training updates \\
Evaluation $\epsilon$ & 0.001 \\
Threshold $\tau$ (BCQ) & 0.3 \\
\hline
\end{tabular}
\end{center}

\begin{center}
Table 5: All Hyperparameters for $DQN^*$\\
\begin{tabular}{ll}
\hline Hyper-parameter & Value \\
\hline Replay buffer size & 1 million \\
Training frequency & Every 4th time step \\
Warmup time steps & $20 \mathrm{k}$ time steps \\
Initial $\epsilon$ & 1.0 \\
Final $\epsilon$ & 0.01 \\
$\epsilon$ decay period & $250 \mathrm{k}$ training iterations \\
\hline
\end{tabular}
\end{center}

\end{document}